\def\blfootnote{\xdef\@thefnmark{}\@footnotetext}
\begin{document}

\title{Simplified Diffusion Schrödinger Bridge} 


\author{
    Zhicong Tang\inst{1*} \and
    Tiankai Hang\inst{2*} \and \\
    Shuyang Gu\inst{3\dagger} \and 
    Dong Chen\inst{3} \and
    Baining Guo\inst{3} \\
    \quad \\
    $^{1}$Tsinghua University\quad$^{2}$Southeast University\quad$^{3}$Microsoft Research Asia \\
    \quad \\
    {\tt\small tzc21@mails.tsinghua.edu.cn, tkhang@seu.edu.cn, \{shuyanggu,doch,bainguo\}@microsoft.com}
}

\authorrunning{Z. Tang et al.}

\institute{
}

\maketitle

\blfootnote{*Equal contribution.~~$\dagger$Corresponding author.}

\begin{abstract}
This paper introduces a novel theoretical simplification of the Diffusion Schrödinger Bridge (DSB) that facilitates its unification with Score-based Generative Models (SGMs), addressing the limitations of DSB in complex data generation and enabling faster convergence and enhanced performance. By employing SGMs as an initial solution for DSB, our approach capitalizes on the strengths of both frameworks, ensuring a more efficient training process and improving the performance of SGM. We also propose a reparameterization technique that, despite theoretical approximations, practically improves the network's fitting capabilities. Our extensive experimental evaluations confirm the effectiveness of the simplified DSB, demonstrating its significant improvements. We believe the contributions of this work pave the way for advanced generative modeling.
  \keywords{Schrödinger Bridge \and Score-based Generative Models}
\end{abstract}

\section{Introduction}
\label{sec:intro}

Score-based Generative Models (SGMs)~\cite{song2019generative,ho2020denoising,song2020score} have recently achieved remarkable success, as highlighted in recent literature~\cite{dhariwal2021diffusion,ramesh2022hierarchical,ramesh2021zero,rombach2022high,gu2022vector}. Despite their advancements, SGMs necessitate a carefully crafted schedule for adding noise, tailored to diverse tasks~\cite{karras2022elucidating,lin2024common,gu2022f,chen2023importance}. Without this customization, training SGMs to handle complex data types, including video~\cite{blattmann2023stable,ho2022imagen,sora2024} and 3D contents~\cite{jun2023shap,tang2023volumediffusion,nichol2022point}, may present significant difficulties. Besides, SGMs are limited to modeling the target data distribution based on a predefined known distribution, \eg Gaussian~\cite{song2020score,ho2020denoising}, which narrows their applicability in certain scenarios, such as conditional generation tasks like unpaired domain transfer~\cite{zhu2017unpaired}.

The Schrödinger Bridge (SB) problem~\cite{schrodinger1932theorie,follmer1988random,chen2020optimal} is considered a more generalized framework that construct a transition between two arbitrary distributions. Numerous approximations of SB have been proposed~\cite{shi2024diffusion,bernton2019schr,chen2016entropic,finlay2020learning,caluya2021wasserstein,pavon2021data}, and one theoretical solution is the Iterative Proportional Fitting (IPF)~\cite{fortet1940resolution,kullback1968probability,ruschendorf1995convergence}. Nevertheless, practical application remains challenging because it involves optimizing joint distributions, which can be highly complex. The previous work IPML~\cite{vargas2021solving} simplifies IPF, and proves that solving the Schrödinger Bridge is equivalent to an autoregressive maximum likelihood estimation objective. A recent work, Diffusion Schrödinger Bridge (DSB)~\cite{de2021diffusion}, also simplifies this by approximating the joint distribution optimization as a conditional distribution optimization problem. It employs two neural networks to enable the transition from one distribution to another by iteratively training them.

While DSB possesses theoretical superiority compared to SGMs, its practical implementation is hindered by a slow convergence rate and potential issues with network fitting capabilities~\cite{de2021diffusion}. Moreover, distinct training methodologies between DSB and SGM currently prevent the former from leveraging the rapid advancements being made in the field of SGM.

In this paper, we bridge the gap between the DSB and SGM by formulating a simplified optimization objective for DSB. Through this unification, SGM can be served as an initial solution for DSB, enabling enhanced outcomes with further iterations according to DSB~\cite{de2021diffusion}. Our theoretical analysis reveals that this initialization strategy is pivotal for the training of DSB, addressing the issue of slow convergence and yielding improved performance. Simultaneously, from the perspective of SGM, a pre-trained model can experience consistent enhancement through supplementary training using the DSB approach.

Furthermore, a key to the success of SGM~\cite{dhariwal2021diffusion,rombach2022high,ramesh2022hierarchical} and other generative models, \eg Flow Matching (FM)~\cite{lipman2022flow}, Conditional Flow Matching (CFM)~\cite{tong2023improving} and Rectified Flow~\cite{liu2022flow}, lies in their use of reparameterization~\cite{salimans2022progressive,ho2020denoising} to create a consistent output space across different timesteps, which maximizes the shared network's fitting capabilities. Drawing inspiration from them, we propose a reparameterization trick to standardize the output space for the DSB. Despite the reliance on a considerable number of theoretical approximations, we were pleasantly surprised to discover that this also unlocked the network's potential, subsequently speeding up convergence and leading to enhanced results.

We conduct comprehensive experiments to confirm the effectiveness of our proposed simplification and to showcase the significant enhancements it brings to DSB. Our contributions are summarized as follows:
\begin{itemize}
    \item[•] We introduce a theoretical simplification of DSB, proving its equivalence to the original formulation.
    \item[•] By employing the simplified version, we successfully integrate SGM with DSB. Using SGM as the initialization for DSB significantly accelerates its convergence.
    \item[•] We devise a reparameterization technique that, despite some theoretical approximations, practically enhances the network's fitting capabilities.
    \item[•] Through extensive experimentation, we examine practical training for DSB and validate the effectiveness of our proposed simplification.
\end{itemize}

\section{Preliminary}

In this section, we initiate by recalling some essential preliminaries of SGM in Section~\ref{sec:SGM}. Then, we introduce the SB problem and DSB, an approximate solution of it, in Section~\ref{sec:SB_and_DSB}. Finally, in Section~\ref{sec:general_perspective}, we summarized prevalent dynamic generative models and unified them into the SDE form of SB.

\noindent \textbf{Notation} Consider a data distribution $p_\textup{data}$ and a prior distribution $p_\textup{prior}$, each defined over the $\mathbb{R}^d$ with positive density. We denote the transformation from $p_\textup{data}$ to $p_\textup{prior}$ as the forward process, whereas the reverse transformation as the backward process. The symbols $k \in \{0, \dots,N\}$ and $t \in [0,T]$ represent discrete and continuous time horizons, respectively. In discrete-time scenarios, $\mathscr{P}_l=\mathscr{P}\left((\mathbb{R}^d)^l\right)$ signifies the probability of a $l$-state joint distribution for any $l\in\mathbb{N}$. In continuous-time scenarios, we use $\mathscr{P}(\mathcal{C})$ where $\mathcal{C}=\textup{C}\left([0,T],\mathbb{R}^d\right)$. $\textup{H}(p)=-\int p(x)\log p(x)\mathrm{d}x$ denotes the entropy of $p$. The Kullback-Leibler divergence between distributions $p$ and $q$ is defined by $\textup{KL}(p|q)=\int p(x)\log \frac{p(x)}{q(x)}\mathrm{d}x$, and Jeffrey's divergence is denoted as $\textup{J}(p|q)=\frac{1}{2}\left(\textup{KL}(p|q)+\textup{KL}(q|p)\right)$.

\subsection{Score-based Generative Models}
\label{sec:SGM}

Score-based Generative Models (SGMs)~\cite{song2019generative,ho2020denoising,song2020denoising,sohl2015deep} connect two distributions through a dual process: a forward process that transitions the data distribution, $p_{\text{data}}$, toward a prior distribution, $p_{\text{prior}}$, and a reverse process, typically guided by neural networks, that converts the prior back to the data distribution. These two processes can be modeled as Markov chains. Given an initial data distribution $p_{\textup{data}}$ and a target prior distribution $p_{\textup{prior}}$, the forward process $p_{k+1|k}(x_{k+1}|x_k)$ is designed to transition from $p_0 = p_{\textup{data}}$ step-by-step towards a close approximation of $p_{\textup{prior}}$. This process generates a sequence $x_{0:N}$ from the $(N+1)$ intermediate steps. This trajectory's joint probability density is then formally defined as:

\begin{equation}
    p(x_{0:N})=p_0(x_0)\prod_{k=0}^{N-1}p_{k+1|k}(x_{k+1}|x_k).
    \label{eq:forward_chain}
\end{equation}

Through the backward process, the joint density can also be reformulated as a time-reversed distribution:

\begin{equation}
    \begin{aligned}
        & p(x_{0:N})=p_N(x_N)\prod_{k=0}^{N-1}p_{k|k+1}(x_k|x_{k+1}) \\
        \text{where}\ & p_{k|k+1}(x_k|x_{k+1})=\frac{p_{k+1|k}(x_{k+1}|x_k)p_k(x_k)}{p_{k+1}(x_{k+1})},
    \end{aligned}
    \label{eq:backward_chain}
\end{equation}

However, directly computing $p_{k|k+1}(x_k|x_{k+1})$ is typically challenging. SGM utilizes a simplified approach that regard the forward process as a gradual adding of Gaussian noise:

\begin{equation}
    p_{k+1|k}(x_{k+1}|x_k)=\mathcal{N}(x_{k+1};x_k+\gamma_{k+1}f_k(x_k),2\gamma_{k+1}\mathbf{I}).
    \label{eq:forward_gaussian}
\end{equation}

\noindent It follows that for a sufficiently extensive $N$, the distribution $p_N$ will converge to  Gaussian distribution, which we set as $p_{\textup{prior}}$. Moreover, the temporal inversion in Equation~\ref{eq:backward_chain} can be analytically approximated~\cite{hyvarinen2005estimation,vincent2011connection,anderson1982reverse} as

\begin{equation}
    \begin{aligned}
        p_{k|k+1}(x_k|x_{k+1})&=p_{k+1|k}(x_{k+1}|x_k)\exp\left(\log p_k(x_k)-\log p_{k+1}(x_{k+1})\right)\\
        &\approx\mathcal{N}(x_k;x_{k+1}-\gamma_{k+1}f_{k+1}(x_{k+1})+2\gamma_{k+1}\nabla\log p_{k+1}(x_{k+1}),2\gamma_{k+1}\mathbf{I})
    \end{aligned}
    \label{eq:backward_gaussian}
\end{equation}

\noindent under the assumption that $p_k\approx p_{k+1}$ and $f_k(x_k)\approx f_{k+1}(x_{k+1})$. Subsequently, SGM employs neural networks $s_\theta(x_{k+1},k+1)$ to approximate the score term $\nabla\log p_{k+1}(x_{k+1})$, thus the reverse process can be effectively modeled. By sampling $x_N \sim p_{\textup{prior}}$, followed by iteratively applying ancestral sampling via $x_k \sim p_{k|k+1}(x_k|x_{k+1})$, culminating in the estimation of $x_0 \sim p_{\textup{data}}$.

The diffusion and denoising processes can also be formulated as continuous-time Markov chains. The forward process can be represented as a continuous-time Stochastic Differential Equation (SDE)~\cite{song2020score}:

\begin{equation}
    \mathrm{d}\mathbf{X}_t=f_t(\mathbf{X}_t)\mathrm{d}t+\sqrt{2}\mathrm{d}\mathbf{B}_t,
    \label{eq:forward_SDE}
\end{equation}

\noindent where $f: \mathbb{R}^d \to \mathbb{R}^d$ is the drift term and $(\mathbf{B}_t)_{t \in [0,T]}$ denotes the Brownian motion, we called it the diffusion term. We note that the Markov chain with transition kernel (\ref{eq:forward_gaussian}) can be viewed as the Euler-Maruyama discretization of this SDE. The backward process, on the other hand, is conceptualized by solving the time-reversed SDE~\cite{haussmann1986time,follmer1985entropy,cattiaux2021time} which corresponds to (\ref{eq:backward_gaussian}):

\begin{equation}
    \mathrm{d}\mathbf{Y}_t=\left(-f_t(\mathbf{Y}_t)+2\nabla\log p_{T-t}(\mathbf{Y}_t)\right)\mathrm{d}t+\sqrt{2}\mathrm{d}\mathbf{B}_t.
    \label{eq:backward_SDE}
\end{equation}

\subsection{Schrödinger Bridge and Diffusion Schrödinger Bridge}
\label{sec:SB_and_DSB}

Consider a reference density $p_\textup{ref} \in \mathscr{P}_{N+1}$ given by Equation~\ref{eq:forward_chain}, the Schrödinger Bridge (SB) problem aims to find $\pi^* \in \mathscr{P}_{N+1}$ which satisfies

\begin{equation}
    \pi^*=\arg\min\{\textup{KL}(\pi|p_\textup{ref}):\pi\in\mathscr{P}_{N+1},\pi_0=p_\textup{data},\pi_N=p_\textup{prior}\}.
    \label{eq:SB_definition}
\end{equation}

Upon acquiring the optimal solution $\pi^*$, we can sample $x_0\sim p_\textup{data}$ by initially drawing $x_N\sim p_\textup{prior}$ and iterates the ancestral sampling $\pi_{t|t+1}(x_t|x_{t+1})$. Conversely, the sampling of $x_N\sim p_\textup{prior}$ is also feasible via the commencement of $x_0\sim p_\textup{data}$ followed by $ \pi_{t+1|t}(x_{t+1}|x_t)$. SB facilitates a bidirectional transition that is not predicated upon any presuppositions regarding $p_\textup{data}$ and $p_\textup{prior}$.

Generally, the SB problem lacks a closed-form solution. Researchers employ the Iterative Proportional Fitting (IPF)~\cite{fortet1940resolution,kullback1968probability,ruschendorf1993note} to address it through iterative optimization:

\begin{equation}
    \label{eq:IPF_definition}
    \begin{aligned}
        \pi^{2n+1}&=\arg\min\{\textup{KL}(\pi|\pi^{2n}):\pi\in\mathscr{P}_{N+1},\pi_N=p_\textup{prior}\},\\
        \pi^{2n+2}&=\arg\min\{\textup{KL}(\pi|\pi^{2n+1}):\pi\in\mathscr{P}_{N+1},\pi_0=p_\textup{data}\}.
    \end{aligned}
\end{equation}

\noindent where $\pi^0=p_\textup{ref}$ is the initialization condition. Nevertheless, the IPF method needs to compute and optimize the joint density, which is usually infeasible within practical settings due to its computational complexity.

Diffusion Schrödinger Bridge (DSB)~\cite{de2021diffusion} can be conceptualized as an approximate implementation of IPF. It dissects the optimization of the joint density into a series of conditional density optimization problems. Specifically, $\pi$ is disentangled into $\pi_{t+1|t}$ and $\pi_{t|t+1}$ in the forward and backward trajectory, respectively.

\begin{equation}
    \label{eq:DSB_definition}
    \begin{aligned}
        \pi^{2n+1}&=\arg\min\{\textup{KL}(\pi_{k|k+1}|\pi^{2n}_{k|k+1}):\pi\in\mathscr{P}_{N+1},\pi_N=p_\textup{prior}\},\\
        \pi^{2n+2}&=\arg\min\{\textup{KL}(\pi_{k+1|k}|\pi^{2n+1}_{k+1|k}):\pi\in\mathscr{P}_{N+1},\pi_0=p_\textup{data}\}.
    \end{aligned}
\end{equation}

We can verify that when the conditional densities (Equation~\ref{eq:DSB_definition}) is optimized, the joint distribution (Equation~\ref{eq:IPF_definition}) can also be optimized. To optimize it, DSB follows the common practice of SGM~\cite{song2019generative,ho2020denoising} to assume the conditional distributions $\pi_{t+1|t}$ and $\pi_{t|t+1}$ are Gaussian distributions. It allows DSB to analytically calculate the time reversal process. Following this, DSB employs two separate neural networks to model the forward and backward transitions, respectively. Through a series of mathematical derivation and approximation~\cite{de2021diffusion}, the training loss of DSB is resembled as

\begin{equation}
    \label{eq:DSB_original_loss}
    \begin{aligned}
        &\mathcal{L}_{B_{k+1}^n}=\mathbb{E}_{(x_k,x_{k+1})\sim p_{k,k+1}^n}\left[\left\|B_{k+1}^n(x_{k+1})-\left(x_{k+1}+F_k^n(x_k)-F_k^n(x_{k+1})\right)\right\|^2\right]\\
        &\mathcal{L}_{F_k^{n+1}}=\mathbb{E}_{(x_k,x_{k+1})\sim q_{k,k+1}^n}\left[\left\|F_{k}^{n+1}(x_{k})-\left(x_{k}+B_{k+1}^n(x_{k+1})-B_{k+1}^n(x_{k})\right)\right\|^2\right]
    \end{aligned}
\end{equation}

\noindent where $p^n=\pi^{2n}$, $q^n=\pi^{2n+1}$ denotes the forward and backward joint densities, respectively. $p_{k+1|k}^n(x_{k+1}|x_{k}) = \mathcal{N}(x_{k+1};x_k+\gamma_{k+1}f_{k}^n(x_{k}), 2\gamma_{k+1} I)$ is the forward process and $q_{k|k+1}^n(x_k|x_{k+1}) = \mathcal{N}(x_k;x_{k+1}+\gamma_{k+1}b_{k+1}^n(x_{k+1}), 2\gamma_{k+1} I)$ is the backward process, where $f_{k}^n(x_{k})$ and $b_{k+1}^n(x_{k+1})$ are drift terms.

In practice, DSB uses two neural networks to approximate $B_{\beta^n}(k,x)\approx B_{k}^n(x)=x+\gamma_kb_k^n(x)$ and $F_{\alpha^n}(k,x)\approx F_{k}^n(x)=x+\gamma_{k+1}f_{k}^n(x)$, $\alpha$ and $\beta$ denotes the network parameters. With the initialization $p_0=p_\textup{ref}$ manually pre-defined as Equation~\ref{eq:forward_gaussian}, DSB iteratively optimizes the backward network $B_{\beta^n}$ and the forward network $F_{\alpha^n}$ for $n\in\{0,1,\ldots,L\}$ to minimize Equation~\ref{eq:DSB_definition}. For $(2n+1)$-th epoch of DSB, we refer to the optimization of the backward network $B_{\beta^n}$, and we optimize the forward network in $F_{\alpha^n}$ with $(2n+2)$-th epoch. ~\cite{de2021diffusion} proves the convergence of this approach:

\begin{proposition}
    Assume $p_N>0$, $p_\textup{prior}>0$, $\left|\textup{H}(p_\textup{prior})\right|<+\infty$,\\$\int_{\mathbb{R}^d}\left|\log p_{N|0}(x_N|x_0)\right|p_\textup{data}(x_0)p_\textup{prior}(x_N)\mathrm{d}x_0\mathrm{d}x_N<+\infty$. Then $(\pi^n)_{n\in\mathbb{N}}$ is well-defined and for any $n>1$ we have 
    \begin{equation*}
        \textup{KL}(\pi^{n+1}|\pi^n)<\textup{KL}(\pi^{n-1}|\pi^n),\quad\textup{KL}(\pi^{n}|\pi^{n+1})<\textup{KL}(\pi^{n}|\pi^{n-1})
    \end{equation*}
     In addition, $\left(\left\|\pi^{n+1}-\pi^n\right\|_\textup{TV}\right)_{n\in\mathbb{N}}$ and $\left(\textup{J}(\pi^{n+1},\pi^n)\right)_{n\in\mathbb{N}}$ are non-increasing. Finally, we have $\lim_{n\to+\infty}n\{\textup{KL}(\pi_0^{n}|p_\textup{data})+\textup{KL}(\pi_N^{n}|p_\textup{prior})\}=0$.
    \label{prop:dsb_convergence}
\end{proposition}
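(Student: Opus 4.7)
The plan is to treat each DSB iterate $\pi^{n+1}$ as a Kullback--Leibler $I$-projection of $\pi^n$ onto a linear (marginal-constrained) family of distributions, and then invoke the classical alternating $I$-projection machinery of Csisz\'ar and R\"uschendorf. First I would establish well-posedness: the hypothesis $\int |\log p_{N|0}| p_\textup{data} p_\textup{prior} < \infty$ combined with $|\textup{H}(p_\textup{prior})|<\infty$ ensures that the coupling $p_\textup{data}\otimes p_\textup{prior}$ extended via the reference conditional kernel has finite $\textup{KL}$ to $p_\textup{ref}$. Each IPF step therefore admits a unique minimizer of $\textup{KL}(\cdot|\pi^n)$ over its closed convex constraint set by strict convexity, and the chain rule of $\textup{KL}$ propagates finiteness along the whole sequence $(\pi^n)_{n\in\mathbb{N}}$.

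The technical backbone is the Pythagorean identity for $I$-projections onto linear families: if $\pi^{n+1}=\arg\min_{\pi\in\mathcal{C}}\textup{KL}(\pi|\pi^n)$ and $\mathcal{C}$ is a fixed-marginal constraint, then for every $\pi\in\mathcal{C}$,
\begin{equation*}
    \textup{KL}(\pi|\pi^n)=\textup{KL}(\pi|\pi^{n+1})+\textup{KL}(\pi^{n+1}|\pi^n).
\end{equation*}
This holds because the $I$-projection substitutes in the target marginal while preserving the conditional distribution on the unconstrained coordinates, so $\log(\pi^n/\pi^{n+1})$ depends only on the constrained coordinate and factors through $\mathcal{C}$. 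Applying the identity with $\pi=\pi^{n-1}\in\mathcal{C}$, which lies in the same constraint set as $\pi^{n+1}$ because the DSB constraints alternate with period two, immediately yields the first monotonicity $\textup{KL}(\pi^{n+1}|\pi^n)<\textup{KL}(\pi^{n-1}|\pi^n)$, with strict inequality whenever the iterates have not yet coincided.

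The reverse monotonicity $\textup{KL}(\pi^n|\pi^{n+1})<\textup{KL}(\pi^n|\pi^{n-1})$ is the main obstacle, since it does not fall out of the Pythagorean identity. Here I would exploit the explicit $I$-projection formula $\pi^{n+1}(x_{0:N})=p_\textup{target}(x_m)\,\pi^n(x_{-m}|x_m)$, which collapses $\textup{KL}(\pi^n|\pi^{n+1})$ to the one-dimensional marginal divergence $\textup{KL}(\pi^n_m|p_\textup{target})$, and analogously for $\textup{KL}(\pi^n|\pi^{n-1})$; the inequality then reduces to a chain-rule comparison that invokes the optimality characterization of the earlier $I$-projection on the same side. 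Non-increase of Jeffrey's divergence is then just the sum of the two scalar monotonicities, and the TV statement follows from Jeffrey's monotonicity via the symmetric Pinsker bound $\|\pi-\pi'\|_{\textup{TV}}^2\le\tfrac{1}{2}\textup{J}(\pi,\pi')$.

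For the $o(1/n)$ rate I would first observe that the explicit $I$-projection formula gives the identity $\textup{KL}(\pi^n_0|p_\textup{data})+\textup{KL}(\pi^n_N|p_\textup{prior})=\textup{KL}(\pi^n|\pi^{n+1})$ in both parities, because at each $n$ one of the two marginal constraints is exactly saturated by construction while the other reduces to the marginal KL on the free coordinate. Telescoping the Pythagorean identity against the true Schr\"odinger bridge $\pi^\star$, whose finite $\textup{KL}$ to $p_\textup{ref}$ follows from the same integrability hypotheses, yields $\sum_n\textup{KL}(\pi^{n+1}|\pi^n)\le\textup{KL}(\pi^\star|p_\textup{ref})<\infty$; termwise domination via the reverse-direction monotonicity then gives $\sum_n\textup{KL}(\pi^n|\pi^{n+1})<\infty$ as well. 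Since this sequence is non-increasing with finite sum, the standard fact that a non-increasing non-negative summable sequence must be $o(1/n)$ delivers the claimed limit.
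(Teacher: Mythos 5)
First, for context: the paper you are annotating does not prove this proposition at all — it is imported verbatim from the original DSB paper of De Bortoli et al.\ (cited as the source immediately before the statement), and the appendix only proves Propositions 2 and 3. Your proposal follows essentially the route taken in that cited source, namely the classical alternating $I$-projection argument of Csisz\'ar and R\"uschendorf: the closed form $\pi^{n+1}(x_{0:N})=p_{\textup{target}}(x_m)\,\pi^n(x_{-m}\mid x_m)$ with $m\in\{0,N\}$, the Pythagorean identity, and telescoping against $\pi^\star$ for the $o(1/n)$ rate. Those parts are sound, with two small caveats you should make explicit: the displayed inequalities are strict only when consecutive iterates differ, and the sequence $a_n=\textup{KL}(\pi^n_0|p_\textup{data})+\textup{KL}(\pi^n_N|p_\textup{prior})$ is non-increasing only along each parity class (one has $a_n\le b_{n-1}\le a_{n-2}$ with $b_n=\textup{KL}(\pi^{n+1}|\pi^n)$), which still suffices to conclude $na_n\to 0$ from summability.

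There is, however, one step whose stated justification is wrong: the total-variation monotonicity does not follow from Jeffrey monotonicity via Pinsker. Pinsker gives $\|\pi^{n+1}-\pi^n\|_{\textup{TV}}^2\le\tfrac12\textup{J}(\pi^{n+1},\pi^n)$, so a decreasing $\textup{J}$ only yields a decreasing \emph{upper bound} on the TV distances, not that the TV sequence itself is non-increasing. The correct argument is exactly parallel to the KL one: from the projection formula, $\pi^{n+1}-\pi^n=\left(p_{\textup{target}}(x_m)-\pi^n_m(x_m)\right)\pi^n(x_{-m}\mid x_m)$, hence $\|\pi^{n+1}-\pi^n\|_{\textup{TV}}=\|p_{\textup{target}}-\pi^n_m\|_{\textup{TV}}=\|\pi^{n-1}_m-\pi^n_m\|_{\textup{TV}}\le\|\pi^{n-1}-\pi^n\|_{\textup{TV}}$, the middle equality using that $\pi^{n-1}_m=p_{\textup{target}}$ because the constraints alternate with period two, and the last step being the data-processing inequality for TV under marginalization. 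Relatedly, your ``chain-rule comparison'' for the reverse KL monotonicity is too vague to check as written; it should be spelled out as $\textup{KL}(\pi^n|\pi^{n+1})=\textup{KL}(\pi^n_m|p_{\textup{target}})=\textup{KL}(\pi^n_m|\pi^{n-1}_m)\le\textup{KL}(\pi^n|\pi^{n-1})$, which is the same marginalization/data-processing step. With these repairs the argument is complete and matches the standard proof.
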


\subsection{A General Perspective of Dynamic Generative Models}
\label{sec:general_perspective}

Previous works~\cite{schrodinger1932theorie,leonard2014survey} have pointed out the optimal solution of Schrödinger Bridge follows the form of SDE:

\begin{equation}
    \label{eq:SB_SDE}
    \begin{aligned}
        \mathrm{d}\mathbf{X}_t&=\left(f(\mathbf{X}_t,t)+g^2(t)\nabla\log\mathbf{\Psi}(\mathbf{X}_t,t)\right)\mathrm{d}t+g(t)\mathrm{d}\mathbf{W}_t,X_0\sim p_\textup{data}\\
        \mathrm{d}\mathbf{X}_t&=\left(f(\mathbf{X}_t,t)-g^2(t)\nabla\log\mathbf{\hat{\Psi}}(\mathbf{X}_t,t)\right)\mathrm{d}t+g(t)\mathrm{d}\mathbf{\bar{W}}_t,X_T\sim p_\textup{prior}
    \end{aligned}
\end{equation}

\noindent where $\mathbf{W}_t$ is Wiener process and $\mathbf{\bar{W}}_t$ its time reversal. $\mathbf{\Psi},\mathbf{\hat{\Psi}}\in\textup{C}^{2,1}\left([0,T],\mathbb{R}^d\right)$ are time-varying energy potentials that constrained by the interconnected PDEs:

\begin{equation}
    \begin{aligned}
        \left\{\begin{matrix}
        \frac{\partial\mathbf{\Psi}}{\partial t}=&-\nabla_x\mathbf{\Psi}^\top f-\frac{1}{2}\textup{Tr}\left(g^2\nabla_x^2\mathbf{\Psi}\right)\\
        \frac{\partial\mathbf{\hat{\Psi}}}{\partial t}=&-\nabla_x\cdot(\mathbf{\hat{\Psi}}f)+\frac{1}{2}\textup{Tr}\left(g^2\nabla_x^2\mathbf{\hat{\Psi}}\right)
        \end{matrix}\right. \quad\quad\quad\quad\ \  \\
        \textup{s.t.}\ \mathbf{\Psi}(x,0)\mathbf{\hat{\Psi}}(x,0)=p_\textup{data}, \mathbf{\Psi}(x,T)\mathbf{\hat{\Psi}}(x,T)=p_\textup{prior}.
    \end{aligned}
    \label{eq:SB_PDE}
\end{equation}

\noindent More generally, for the distribution of SB at time $t$, we can acheived by: 
\begin{equation}
    p_t = \mathbf{\Psi}(x,t)\mathbf{\hat{\Psi}}(x,t)    
    \label{eq:SB_marginal}
\end{equation}

Upon close inspection, one can observe that Equation~\ref{eq:forward_SDE} and Equation~\ref{eq:SB_SDE} differ only by the additional non-linear drift term $g^2(t)\nabla\log\mathbf{\Psi}(\mathbf{X}_t,t)$. 
Notably, SGMs may be regarded as a particular instantiation of DSB when the non-linear drift terms are set to zero, \ie $\mathbf{\Psi}(\mathbf{X}_t,t)\equiv C$. For Variance Preserving (VP)~\cite{ho2020denoising} and Variance Exploding (VE)~\cite{song2019generative} noise schedule in SGMs, $f(\mathbf{X}_t,t) = -\alpha_t\mathbf{X}_t$ where $\alpha_t\in\mathbb{R}_{\ge 0}$, and the denoising model is essentially solving Equation~\ref{eq:SB_SDE} using a learnable network.

More general, other dynamic generative models, such as Flow Matching (FM)~\cite{lipman2022flow}, I$^2$SB~\cite{liu20232}, Bridge-TTS~\cite{chen2023schrodinger}, can be encapsulated within the framework of Equation~\ref{eq:SB_SDE} by selecting appropriate functions for $f(\mathbf{X}_t,t)$ and $\mathbf{\Psi}(\mathbf{X}_t,t)$. We leave a more comprehensive discussion in the appendix. This unification of dynamic generative models suggests the possibility of a more profound linkage between DSB, SGM and other dynamic generative models, which we will explore in the following sections. 

\section{Simplified Diffusion Schrödinger Bridge}

In this section, we start by analyzing the training function of DSB. We introduce a streamlined training objective and establish its equivalence to the original form in Section~\ref{sec:simplified_target}. In Section~\ref{sec:convergence_analysis}, we analyze the convergence of DSB and point out that initialization is the key for optimization. Leveraging our proposed simplified objective, we are able, for the first time, to treat SGMs as the backward solution of the DSB's initial epoch, which will be elucidated in Section~\ref{sec:diffusion_as_dsb}. In Section~\ref{sec:initialization}, we explore how to employ an SGM or FM model as a suitable initialization for the first forward epoch in DSB.

\subsection{Simplified training target}
\label{sec:simplified_target}

In vanilla DSB, the training target for both forward and backward models is complex, as shown in Equation~\ref{eq:DSB_original_loss}, which is hard to understand its physical meaning. Firstly, we propose a simplified version of training loss for DSB as

\begin{equation}
    \begin{aligned}
        &\mathcal{L}_{B_{k+1}^n}^{'}=\mathbb{E}_{(x_k,x_{k+1})\sim p_{k,k+1}^n}\left[\left\|B_{k+1}^n(x_{k+1})-x_{k}\right\|^2\right], \\
        &\mathcal{L}_{F_k^{n+1}}^{'}=\mathbb{E}_{(x_k,x_{k+1})\sim q_{k,k+1}^n}\left[\left\|F_{k}^{n+1}(x_{k})-x_{k+1}\right\|^2\right].
    \end{aligned}
    \label{eq:DSB_simplified_loss}
\end{equation}

The following proposition demonstrates that the Simplified DSB (S-DSB) is equivalent to the original one shown in Equation~\ref{eq:DSB_original_loss}.

\begin{proposition}
    \label{prop:SDSB}
    Assume that for any $n\in\mathbb{N}$ and $k\in\{0,1,\ldots,N-1\}$, $\gamma_{k+1}>0$, $q_{k|k+1}^n=\mathcal{N}\left(x_k;B_{k+1}^n(x_{k+1}),2\gamma_{k+1}\mathbf{I}\right)$, $p_{k+1|k}^n=\mathcal{N}\left(x_{k+1};F_{k}^n(x_{k}),2\gamma_{k+1}\mathbf{I}\right)$, where $B_{k+1}^n(x)=x+\gamma_{k+1}b_{k+1}^n(x)$ and $F_{k}^n(x)=x+\gamma_{k+1}f_{k}^n(x)$. Assume $b_{k+1}^n(x_{k+1})\approx b_{k+1}^n(x_k)$ and $f_{k}^n(x_{k+1})\approx f_{k}^n(x_k)$. Then we have
    \begin{equation}
        \begin{aligned}
            \mathcal{L}_{B_{k+1}^n}^{'}\approx\mathcal{L}_{B_{k+1}^n}, 
            \mathcal{L}_{F_k^{n+1}}^{'}\approx\mathcal{L}_{F_k^{n+1}}.
        \end{aligned}
        \label{eq:propositon_loss_equivalent}
    \end{equation}
\end{proposition}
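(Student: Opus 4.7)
The statement is essentially an algebraic simplification followed by one application of the stated smoothness hypotheses. My plan is to unfold the definitions $F_k^n(x)=x+\gamma_{k+1}f_k^n(x)$ and $B_{k+1}^n(x)=x+\gamma_{k+1}b_{k+1}^n(x)$ inside the residual of each loss in Equation~\ref{eq:DSB_original_loss}, observe that one of the boundary points ($x_{k+1}$ in the backward target, $x_k$ in the forward target) cancels telescopically, and then invoke $f_k^n(x_{k+1})\approx f_k^n(x_k)$ and $b_{k+1}^n(x_{k+1})\approx b_{k+1}^n(x_k)$ to discard the remaining $\gamma_{k+1}$-order correction. The identity of the two expected squared norms then follows.

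\textbf{Main computation.} For the backward loss, substituting the definition of $F_k^n$ into the target gives
\begin{equation*}
x_{k+1}+F_k^n(x_k)-F_k^n(x_{k+1})
= x_k + \gamma_{k+1}\bigl(f_k^n(x_k)-f_k^n(x_{k+1})\bigr)
\approx x_k ,
\end{equation*}
so $\mathcal{L}_{B_{k+1}^n}\approx\mathbb{E}\,\|B_{k+1}^n(x_{k+1})-x_k\|^2=\mathcal{L}_{B_{k+1}^n}^{'}$. The forward case is symmetric: substituting $B_{k+1}^n$,
\begin{equation*}
x_k+B_{k+1}^n(x_{k+1})-B_{k+1}^n(x_k)
= x_{k+1} + \gamma_{k+1}\bigl(b_{k+1}^n(x_{k+1})-b_{k+1}^n(x_k)\bigr)
\approx x_{k+1} ,
\end{equation*}
so the forward loss collapses to $\mathcal{L}_{F_k^{n+1}}^{'}$. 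The two hypotheses in the statement are used exactly once each, matched to the correct side.

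\textbf{Where I would slow down.} The only subtle point is that the approximation sits inside a squared norm. Writing $u:=B_{k+1}^n(x_{k+1})-x_k$ and $v:=\gamma_{k+1}(f_k^n(x_k)-f_k^n(x_{k+1}))$, the expansion $\|u+v\|^2=\|u\|^2+2\langle u,v\rangle+\|v\|^2$ shows that the gap between $\mathcal{L}_{B_{k+1}^n}$ and $\mathcal{L}_{B_{k+1}^n}^{'}$ is $O(\gamma_{k+1})$ in the cross term and $O(\gamma_{k+1}^2)$ in the quadratic term. Since $(x_k,x_{k+1})\sim p_{k,k+1}^n$ with Gaussian kernel of variance $2\gamma_{k+1}\mathbf{I}$, the two points are $O(\sqrt{\gamma_{k+1}})$-close on average, so under mild regularity of $f_k^n$ both error terms vanish as $\gamma_{k+1}\to 0$; this is the same order of approximation used to derive Equation~\ref{eq:DSB_original_loss} from the KL objective in~\cite{de2021diffusion}. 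I would state this quantification in a single line rather than prove a Lipschitz bound, since the proposition is formulated at the informal ``$\approx$'' level; the symmetric argument for $\mathcal{L}_{F_k^{n+1}}$ is identical with the roles of $b$ and $f$ swapped.
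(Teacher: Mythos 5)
Your proof follows essentially the same route as the paper's: unfold the definition of $F_k^n$ (resp.\ $B_{k+1}^n$) inside the residual, cancel the telescoping boundary term, and invoke the hypothesis $f_k^n(x_k)\approx f_k^n(x_{k+1})$ (resp.\ $b_{k+1}^n(x_k)\approx b_{k+1}^n(x_{k+1})$) to conclude the two prediction targets agree, hence the losses agree. Your added remark quantifying the $O(\gamma_{k+1})$ cross term and $O(\gamma_{k+1}^2)$ quadratic term inside the squared norm is a small refinement the paper omits (its own derivation even drops a factor of $\gamma_{k+1}$ in one line), but it does not change the argument.
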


We leave the proof in the appendix. It is worth noting that this approximation is predicated on the assumptions that $b_{k+1}^n(x_{k+1})\approx b_{k+1}^n(x_{k})$ and $f_k^n(x_k)\approx f_k^n(x_{k+1})$, which were previously employed in the original formulation~\cite{de2021diffusion} of DSB (Equation~\ref{eq:DSB_original_loss}) to derive the reverse-time transition. Hence, the approximation introduced by this simplification is theoretically acceptable. 

This simplified loss has two advantages. First, it saves half of the number of forward evaluations (NFE) when computing the prediction target. The original loss Equation~\ref{eq:DSB_original_loss} needs to run model twice ($F_k^n(x_k)$ and $F_k^n(x_{k+1})$), while the simplified target in Equation~\ref{eq:DSB_simplified_loss} needs only one evaluation. This may be critical when the networks $B_{k+1}^n(x)$ and $F_k^n(x)$ are large or the dimension of $x$ is high in practical settings, e.g., image and video generation.

Secondly, and of greater significance, the simplified optimization objective aligns with SGM, enhancing our intuitive grasp of the process. Whether dealing with a forward or backward process, the requirement is merely to predict the subsequent state, akin to the approach of SGM. In the following section, we will see that this unification is crucial for the effective training of DSB.

\subsection{The convergence analysis}
\label{sec:convergence_analysis}

We illustrate the framework of DSB in Figure~\ref{fig:dsb_method}. Note that DSB can transit between two unknown distributions. In order to be consistent with SGM, we use $p_\textup{data}$ and $p_\textup{prior}$ to refer to these two distributions. Specifically, for each epoch, the backward model is trained to map from $p_\textup{prior}$ back to $p_\textup{data}$. Subsequently, the forward model is trained to approximate the mapping from $p_\textup{data}$ to $p_\textup{prior}$.

\begin{figure}[t]
  \centering
   \includegraphics[width=0.8\linewidth]{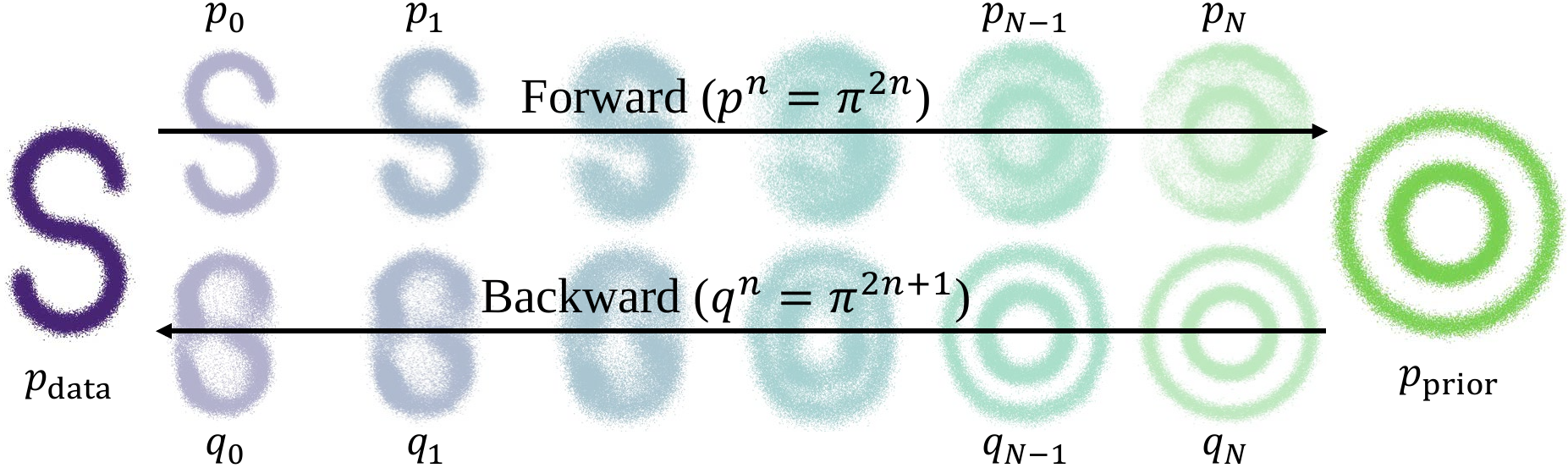}
   \caption{Illustration of the pipeline of Diffusion Schrödinger Bridge.}
   \label{fig:dsb_method}
   \vspace{-0.5cm}
\end{figure}

During practical training, we observed that the results of DSB frequently exhibit slow convergence, requiring an excessive number of epochs to reach optimal state. To elucidate the underlying cause, we conducted an analysis of the convergence performance and take the backward process as an example. Note that $p_\textup{data}=\pi_0^{2n}=p_0^n=\int p_N(x_N)\prod_{i=0}^{N-1}p_{i|i+1}^n(x_i|x_{i+1})\mathrm{d}x_i$, we have

\begin{equation}
    \begin{aligned}
        \pi_0^{2n+1} - p_\textup{data} &= q_{0}^n(x_0) - p_\textup{data} \\
        &= \int q_{0|1}^n(x_0|x_1)q_{1}^n(x_1)\mathrm{d}x_1 - p_\textup{data} \\ 
        &= \cdots \\
        &= \int q_{N}^n(x_N)\prod_{i=0}^{N-1}q_{i|i+1}^n(x_i|x_{i+1})\mathrm{d}x_i - \int p^n_N(x_N)\prod_{i=0}^{N-1}p_{i|i+1}^n(x_i|x_{i+1})\mathrm{d}x_i\\
        &\stackrel{\text{*}}{=} \int (p_\textup{prior}(x_N) - p^n_N(x_N)) \prod_{i=0}^{N-1}p_{i|i+1}^n(x_i|x_{i+1})\mathrm{d}x_i, \\
    \end{aligned}
\end{equation}

\noindent where $(*)$ is established because DSB assumes every epoch achieves the convergence state (Equation~\ref{eq:DSB_definition}), \ie $q_{i|i+1}^n(x_i|x_{i+1})=p_{i|i+1}^n(x_i|x_{i+1})$ for any $i\in\{0,1,\ldots,N-1\}$. Similarly, we can get

\begin{equation}
    \begin{aligned}
        \pi_N^{2n+2}-p_\textup{prior}=\int \left(q_\textup{data}(x_0)-q^n_0(x_0)\right)\prod_{i=1}^{N}q_{i+1|i}^n(x_{i+1}|x_{i})\mathrm{d}x_i.
    \end{aligned}
\end{equation}

Consequently, our findings indicate that the convergence of $(n+1)$-th epoch relies on the convergence of previous $n$-th epoch. This observation underscores the stability of DSB when each epoch reaches the optimal solution. Moreover, these insights suggest that an effective initial state may significantly expedite the convergence process.

\subsection{SGM as the first backward epoch}
\label{sec:diffusion_as_dsb}
As previously noted, our simplified DSB and SGM share the same training objective. Hence, by setting the $p_\textup{ref}$ in Equation~\ref{eq:SB_definition} the same as the noise schedule of SGM, the first epoch of DSB is theoretically equivalent to the training of SGM. Therefore, we can utilize a pre-trained SGM as the solution of the first backward epoch ($B_{\beta^1}$). Through multiple rounds of forward and backward training, the quality of the generated data will be continuously improved according to the progressively enhancing attributes of DSB (as stated in Proposition~\ref{prop:dsb_convergence}). Consequently, this process yields a new model that surpasses the initial SGM in terms of modeling the target distribution.

It's noteworthy that the majority of current SGMs~\cite{rombach2022high,gu2022vector} employ the reparameterization trick~\cite{ho2020denoising}, whereby the denoising model typically predicts either the noise ($\epsilon \in p_\textup{prior}$) or the clean data ($x_0 \in p_\textup{data}$). We adapt this approach to align with our simplified optimization target in Equation~\ref{eq:DSB_simplified_loss} by reverting the prediction focus to the next state~\cite{ho2020denoising}. As for Flow Matching~\cite{lipman2022flow} models, it can be considered as a variant of SGM that utilizes a unique noise schedule ($f(\mathbf{X}_t,t) = -\frac{t}{T}\mathbf{X}_t$) and is reparameterized to produce outputs based on $x_0 - \epsilon$ via neural networks. By recalibrating these models, we can seamlessly integrate them as the initial backward epoch in the training process of DSB.

\subsection{Initialization of the first forward epoch}
\label{sec:initialization}
The training of the first forward epoch is the second epoch in the entire training, which is designed to train a neural network ($F_{\alpha^1}$) to transform $p_\textup{data}$ into $p_\textup{prior}$. It ensures that the intermediate states adhere to the KL constraint imposed by trajectories of the first backward epoch, as described in Equation~\ref{eq:DSB_original_loss}. Rather than training from scratch, it is advantageous to leverage a pretrained SGM as a starting point and refine it according to Equation~\ref{eq:DSB_simplified_loss}. Specifically, we directly train an SGM to facilitate the transition from $p_\textup{data}$ to $p_\textup{prior}$, and shift the training objective to the subsequent state. Although standard SGMs typically model the transition from the prior distribution to the data distribution, we find that by utilizing FM models and reversing the roles of $p_\textup{prior}$ and $p_\textup{data}$, we can also approximate a rough transformation from the data distribution to the prior distribution. This approximation serves as an effective initial state for training the first forward epoch of DSB.

Additionally, an alternative strategy involves utilizing the same SGM we used in the first backward epoch but adopts an altered reparameterization. Unlike the backward process that requires the prediction of $\mathbb{E}(x_{k-1})$, the forward process focuses on estimating $\mathbb{E}(x_{k+1})$. To facilitate this shift, we adjust the reparameterization coefficients accordingly. For instance, in Variance Preserving~\cite{ho2020denoising} SGM, we can employ $\sqrt{\bar{\alpha}_{k+1}}x_0+\sqrt{1-\bar{\alpha}_{k+1}} \epsilon$ to initialize the result, where $x_0$ is the prediction made by the denoising network, and $\epsilon$ can be calculated by $\sqrt{\bar{\alpha}_{k}}x_0+\sqrt{1-\bar{\alpha}_{k}} \epsilon = x_k$. Empirical results have shown that initiating training with this adjusted reparameterization leads to more efficient convergence compared to training from scratch.

\section{Reparameterized Diffusion Schrödinger Bridge}
\label{sec:RDSB}

Score-based generative models (SGMs) employ neural networks $s_\theta(x_{k+1},k+1)$ to approximate the score $\nabla\log p_{k+1}(x_{k+1})$ as described in Equation~\ref{eq:backward_gaussian}. However, contemporary SGMs~\cite{ho2020denoising} tend to predict noise ($\epsilon \in p_\textup{prior}$) or data ($x_0 \in p_\textup{data}$) rather than directly estimating the score. Researchers~\cite{salimans2022progressive} find that a suitable reparameterization is crucial in the training of SGM, since a single network may struggle with the dynamic distribution of score across time~\cite{hang2023efficient,balaji2022ediffi}. Reparameterization simplifies the network's task by allowing it to focus on a consistent output target at different timesteps.

Within the framework of S-DSB, both the forward and backward networks aim to predict the expectation of subsequent states, as depicted in Equation~\ref{eq:DSB_simplified_loss}. This challenge parallels the one encountered in the training of original SGM. To circumvent this difficulty, we introduce Reparameterized DSB (R-DSB).

Similar to Denoising Diffusion Probabilistic Models~\cite{ho2020denoising}, we first derive an estimation for the posterior transitions $p_{k|k+1,0}^n(x_{k}|x_{k+1},x_0)$ and $q_{k+1|k,N}^n(x_{k+1}|x_k,x_N)$, where $p^n=\pi^{2n}$ and $q^n=\pi^{2n+1}$ are defined in Equation~\ref{eq:DSB_definition}. In the subsequent discussion, we denote $\bar{\gamma}_k=\sum_{n=1}^k\gamma_{n}$ and $\bar{\gamma}_0=0$.

\begin{proposition}
    Assume $\sum_{k=1}^N\gamma_{k}=1$. Given the forward and backward trajectories $x_{0:N}\sim p^n(x_{0:N})=p_0^n(x_0)\prod_{k=0}^{N-1}p_{k+1|k}^n(x_{k+1}|x_k)$ and $x_{0:N}'\sim q^n(x_{0:N})=q_N^n(x_N)\prod_{k=0}^{N-1}p_{k|k+1}^n(x_k|x_{k+1})$. Under some assumptions, we have
    \begin{equation}
        \begin{aligned}
            & q_{k|k+1}^n(x_{k}|x_{k+1})\approx p_{k|k+1,0}^n(x_{k}|x_{k+1},x_0) = \mathcal{N}(x_{k};\mu_{k+1}^n(x_{k+1},x_0),\sigma_{k+1}\mathbf{I}), \\
            & p_{k+1|k}^{n+1}(x_{k+1}|x_{k})\approx q_{k+1|k,N}^n(x_{k+1}'|x_k',x_N') = \mathcal{N}(x_{k+1}';\tilde{\mu}_k^n(x_k',x_N'),\tilde{\sigma}_{k+1}\mathbf{I}),
        \end{aligned}
        \label{eq:RDSB_1}
    \end{equation}
    \begin{equation}
        \begin{aligned}
            & \mu_{k+1}^n(x_{k+1},x_0)\approx x_{k+1}+\frac{\gamma_{k+1}}{\bar{\gamma}_{k+1}}(x_0-x_{k+1}), \sigma_{k+1}=\frac{2\gamma_{k+1}\bar{\gamma}_{k}}{\bar{\gamma}_{k+1}}, \\
            & \tilde{\mu}^n_k(x_k',x_N')\approx x_k'+\frac{\gamma_{k+1}}{1-\bar{\gamma}_{k}}(x_N'-x_k'), \tilde{\sigma}_{k+1}=\frac{2\gamma_{k+1}(1-\bar{\gamma}_{k+1})}{1-\bar{\gamma}_k}.
        \end{aligned}
        \label{eq:RDSB_2}
    \end{equation}
    \label{prop:RDSB_prop}
\end{proposition}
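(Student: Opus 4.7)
The plan is to derive each posterior by a DDPM-style Bayes' rule computation followed by completion of the square. For the backward direction, by the Markov property of $p^n$,
\begin{equation*}
p_{k|k+1,0}^n(x_k\mid x_{k+1}, x_0) = \frac{p_{k+1|k}^n(x_{k+1}\mid x_k)\, p_{k|0}^n(x_k\mid x_0)}{p_{k+1|0}^n(x_{k+1}\mid x_0)}.
\end{equation*}
The one-step transition $p_{k+1|k}^n = \mathcal{N}(\,\cdot\,;\, x_k + \gamma_{k+1}f_k^n(x_k),\, 2\gamma_{k+1}\mathbf{I})$ is given, and under the same ``drift nearly constant along a trajectory'' hypothesis already used to derive the simplified loss in Proposition~\ref{prop:SDSB}, the chained Gaussian increments give
\begin{equation*}
p_{k|0}^n(x_k\mid x_0) \approx \mathcal{N}\!\left(x_k;\; x_0 + \sum_{i=0}^{k-1}\gamma_{i+1} f_i^n(x_0),\; 2\bar\gamma_k\mathbf{I}\right),
\end{equation*}
and together with the normalization $\sum_k\gamma_k=1$ this lets one view the forward chain as a time-rescaled Brownian motion on $[0,1]$ with slowly varying drift.

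Substituting both Gaussians into Bayes' rule and collecting terms in $x_k$, the quadratic coefficient is drift-free and yields immediately
\begin{equation*}
\sigma_{k+1}^{-1} = \frac{1}{2\gamma_{k+1}} + \frac{1}{2\bar\gamma_k} = \frac{\bar\gamma_{k+1}}{2\gamma_{k+1}\bar\gamma_k},\qquad \sigma_{k+1} = \frac{2\gamma_{k+1}\bar\gamma_k}{\bar\gamma_{k+1}},
\end{equation*}
matching \eqref{eq:RDSB_2}. For the mean, the linear terms from the drift pieces in $p_{k+1|k}^n$ and $p_{k|0}^n$ combine, modulo the pure Brownian-bridge contribution, into a term proportional to $\gamma_{k+1}(\mu_k - \bar\gamma_k f_k^n(x_0))/\bar\gamma_{k+1}$, where $\mu_k = \sum_{i<k}\gamma_{i+1}f_i^n(x_0)$. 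The slow-variation hypothesis forces $\mu_k \approx \bar\gamma_k f_k^n(x_0)$, so this correction vanishes and what remains is exactly the bridge mean $\frac{\bar\gamma_k}{\bar\gamma_{k+1}}x_{k+1} + \frac{\gamma_{k+1}}{\bar\gamma_{k+1}}x_0 = x_{k+1} + \frac{\gamma_{k+1}}{\bar\gamma_{k+1}}(x_0-x_{k+1})$, as claimed.

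The forward-direction posterior $q_{k+1|k,N}^n$ is then handled symmetrically: propagate backward from $x_N$ under $q_{k|k+1}^n = \mathcal{N}(\,\cdot\,;\, x_{k+1}+\gamma_{k+1}b_{k+1}^n(x_{k+1}),\, 2\gamma_{k+1}\mathbf{I})$ with the analogous assumption on $b^n$, giving total variance $2(1-\bar\gamma_k)$ from $x_N$ down to $x_k$ and producing $\tilde\sigma_{k+1} = 2\gamma_{k+1}(1-\bar\gamma_{k+1})/(1-\bar\gamma_k)$ and $\tilde\mu_k^n(x_k,x_N) = x_k + \frac{\gamma_{k+1}}{1-\bar\gamma_k}(x_N - x_k)$ by the same completion-of-squares argument. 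The two approximations $q_{k|k+1}^n \approx p_{k|k+1,0}^n$ and $p_{k+1|k}^{n+1}\approx q_{k+1|k,N}^n$ in \eqref{eq:RDSB_1} then follow from the DSB optimality at the relevant epoch: the unconditional reverse transition equals the mixture $\int p_{k|k+1,0}^n(\cdot\mid x_{k+1},x_0)\,p_{0|k+1}^n(x_0\mid x_{k+1})\,\mathrm{d}x_0$, so plugging in a specific $x_0$ drawn along the same trajectory gives a tractable Gaussian target that is close to $q_{k|k+1}^n$ on the training support.

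The main obstacle is pinning down the minimal regularity on $f_k^n$ and $b_{k+1}^n$ that makes the drift cross-terms cancel cleanly enough for the posterior to take the Brownian-bridge form of \eqref{eq:RDSB_2}. A natural sufficient condition is a Lipschitz-with-small-constant assumption on the drifts (consistent in spirit with the hypothesis of Proposition~\ref{prop:SDSB}), under which $\mu_k/\bar\gamma_k$ varies by $O(\gamma_{\max})$ across $k$. I would state this explicitly as the ``some assumptions'' clause of the proposition and defer the quantitative error bound to the appendix, keeping the main-text argument at the level of the Gaussian algebra above.
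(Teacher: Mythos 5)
Your proposal is correct and follows essentially the same route as the paper's appendix proof: a Bayes'-rule/completion-of-squares computation of the Gaussian posterior (yielding the identical variance $\tfrac{2\gamma_{k+1}\bar\gamma_k}{\bar\gamma_{k+1}}$), a slow-variation assumption on the time-averaged accumulated drift to cancel the drift correction in the mean (this is exactly the paper's stated hypothesis $\tfrac{1}{\bar\gamma_k}\tilde h(0,\bar\gamma_k)\approx\tfrac{1}{\bar\gamma_{k+1}}\tilde h(0,\bar\gamma_{k+1})$, which you correctly identify as the hidden ``some assumptions'' clause), and IPF optimality to identify $q^n_{k|k+1}$ with the conditioned posterior. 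The only cosmetic differences are that the paper phrases the accumulated drift as a continuous-time integral rather than your discrete sum, and justifies $q^n_{k|k+1}\approx p^n_{k|k+1,0}$ via a DDPM-style ELBO decomposition of $\textup{KL}(q^n|p^n)$ whose denoising-matching terms vanish at the optimum, whereas you use the mixture representation of the reverse kernel --- both arguments are heuristic at the same level.
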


\noindent We leave the proof in the appendix. The proposition suggests that when learning the forward process, $x_{k+1}$ depends on the previous state $x_k$ and the terminal $x_N$. While in the backward process, $x_{k}$ also depends on the previous state $x_{k+1}$ and the terminal $x_0$. These leads to the following reparameterization.

\textbf{Terminal Reparameterized DSB (TR-DSB)} Following DDPM~\cite{ho2020denoising}, we design two neural networks to output the terminal points of the forward and backward trajectories, respectively. The training losses are 

\begin{equation}
    \begin{aligned}
        &\mathcal{L}_{\tilde{B}_{k+1}^n}=\mathbb{E}_{(x_0,x_{k+1})\sim p_{0,k+1}^n}\left[\left\|\tilde{B}_{k+1}^n(x_{k+1})-x_{0}\right\|^2\right], \\
        &\mathcal{L}_{\tilde{F}_{k}^{n+1}}=\mathbb{E}_{(x_{k},x_{N})\sim q_{k,N}^n}\left[\left\|\tilde{F}_{k}^{n+1}(x_{k})-x_{N}\right\|^2\right].
    \end{aligned}
    \label{eq:TRDSB_loss}
\end{equation}

\noindent Upon completion of training, we can perform sampling using Equation~\ref{eq:RDSB_1} and Equation~\ref{eq:RDSB_2}, where $x_N=\tilde{F}_{k}^n(x_{k})$ and $x_0=\tilde{B}_{k+1}^n(x_{k+1})$.

\textbf{Flow Reparameterized DSB (FR-DSB)} Following Flow Matching~\cite{lipman2022flow}, we can also employ neural networks to predict the vector that connects the start point to the end point. The training losses are

\begin{equation}
    \begin{aligned}
        &\mathcal{L}_{\tilde{b}_{k+1}^n}=\mathbb{E}_{(x_0,x_{k+1})\sim p_{0,k+1}^n}\left[\left\|\tilde{b}_{k+1}^n(x_{k+1})-\frac{x_0-x_{k+1}}{\bar{\gamma}_{k+1}}\right\|^2\right], \\
        &\mathcal{L}_{\tilde{f}_{k}^{n+1}}=\mathbb{E}_{(x_{k},x_{N})\sim q_{k,N}^n}\left[\left\|\tilde{f}_{k}^{n+1}(x_{k})-\frac{x_N-x_{k}}{1-\bar{\gamma}_{k}}\right\|^2\right].
    \end{aligned}
    \label{eq:FRDSB_loss}
\end{equation}

\noindent By employing this reparameterization, we are able to generate samples via Equation~\ref{eq:RDSB_1} using $\tilde{\mu}_k^n(x_k,x_N)\approx x_k+\gamma_{k+1}\tilde{f}_{k}^n(x_{k})$ for the forward process, and $\mu_{k+1}^n(x_{k+1},x_0)\approx x_{k+1}+\gamma_{k+1}\tilde{b}_{k+1}^n(x_{k+1})$ for the backward process.

\section{Experiment}

\subsection{Simplified training objectives}

\begin{figure}[t]
    \centering
    \begin{subfigure}[b]{1.0\linewidth}
        \centering
        \includegraphics[width=1.0\linewidth,page=1]{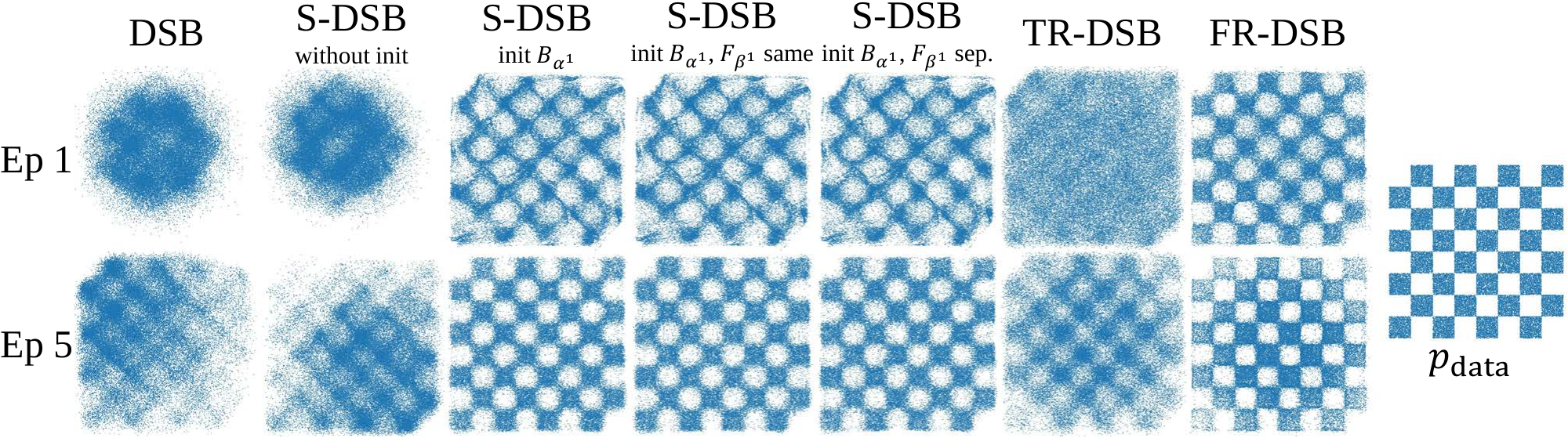}
        \caption{Backward process results}
    \end{subfigure}
    
    \vspace{0.2cm}
    \begin{subfigure}[b]{1.0\linewidth}
        \centering
        \includegraphics[width=1.0\linewidth,page=2]{figure/figure_compare.pdf}
        \caption{Forward process results}
    \end{subfigure}
    \caption{Comparison between DSB, S-DSB with different initialization, and R-DSB on \texttt{checkerboard} $\leftrightarrow$ \texttt{pinwheel}.}
    \vspace{-0.5cm}
    \label{fig:compare_dsb}
\end{figure}

We first compare our proposed Simplified DSB (S-DSB) with the vanilla DSB using 2D synthetic datasets. The data distribution and prior distribution are configured as \texttt{checkerboard} and \texttt{pinwheel}, respectively, as depicted in Figure~\ref{fig:compare_dsb}. For S-DSB, we explore various initialization approaches: (1) \textit{without init}, where $B_{\beta^1}$ and $F_{\alpha^1}$ are trained from scratch; (2) \textit{init $B_{\beta^1}$}, where a pretrained SGM is employed as the initial backward transition $B_{\beta^1}$ and training starts from the subsequent forward epoch; (3) \textit{init $B_{\beta^1}$ and $F_{\alpha^1}$ with same model}, utilizing the same SGM to initialize both $B_{\beta^1}$ and $F_{\alpha^1}$, then start training from the first forward epoch; (4) \textit{init $B_{\beta^1}$ and $F_{\alpha^1}$ separately}, we adopt two distinct SGMs to initialize $B_{\beta^1}$ and $F_{\alpha^1}$, followed by training from the first forward epoch. Furthermore, we also compare these configurations against our Reparameterized DSB (R-DSB).

The results are presented in Figure~\ref{fig:compare_dsb}, showing that both DSB and S-DSB with random initialization exhibit comparable performance, which suggests a theoretical equivalence between them. Notably, S-DSB requires half of network function evaluations (NFEs) for caching training pairs, thereby accelerating the training process. Furthermore, by capitalizing on pretrained SGMs, S-DSB achieves superior results within the same number of training iterations. This finding aligns with our analysis in Section~\ref{sec:convergence_analysis}, which states that the convergence of $q_0^{n}$ and $p_N^{n+1}$ depends on $p_N^{n}$ and $q_0^n$, respectively. Effective initializations that bridge the gap between $p_N^0$ and $p_\textup{prior}$ significantly expedite and stabilize the training of S-DSB. In addition, the results indicate that FR-DSB exhibits robust performance, while the results of TR-DSB are relatively poor. This is analogous to insights from the reparameterization trick in SGMs, suggesting that distinct reparameterization strategies may be more suitable to specific target distributions.

\subsection{Convergence of DSB}

\begin{figure}[t]
\begin{minipage}[c]{0.49\linewidth}
    \includegraphics[width=\linewidth]{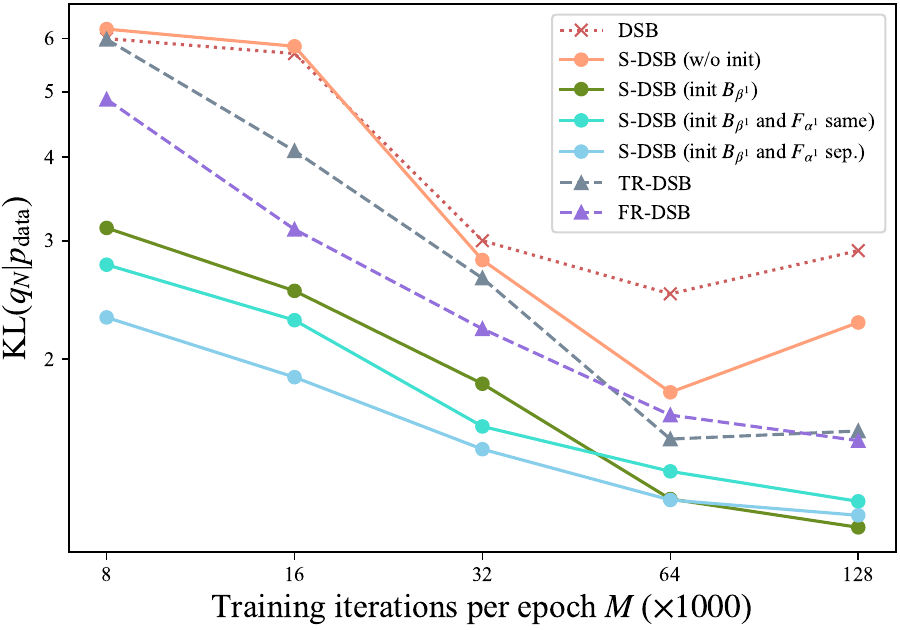}
    \caption{Evolution of performance with the increase of training iterations per epoch.}
    \label{fig:kl_iterations_per_epoch}
\end{minipage}
\hfill
\begin{minipage}[c]{0.49\linewidth}
    \includegraphics[width=\linewidth]{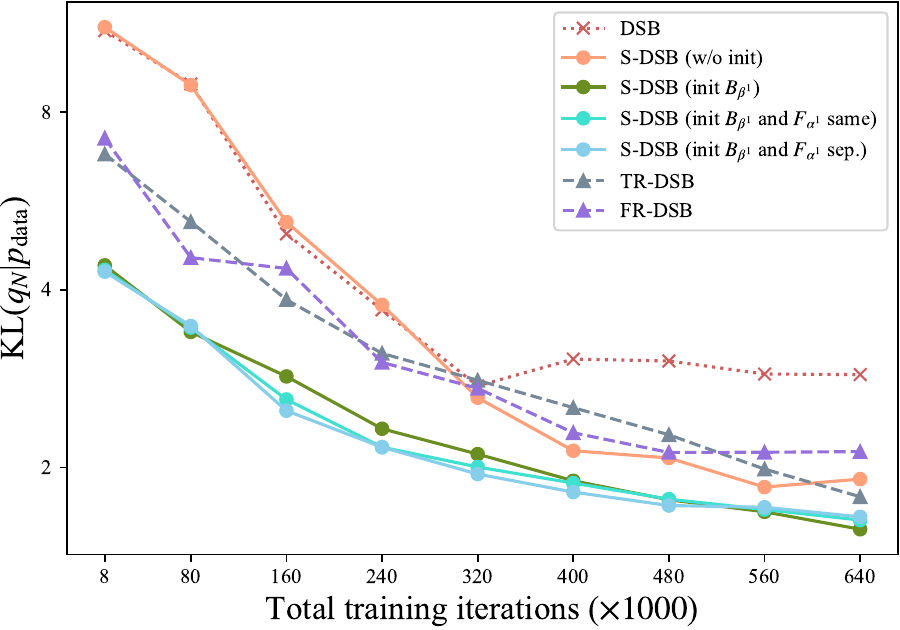}
    \caption{Evolution of performance with the increase of total training iterations.}
    \label{fig:kl_iterations}
\end{minipage}%
\vspace{-0.5cm}
\end{figure}

\begin{figure}[t]
\begin{minipage}[c]{0.65\linewidth}
    \includegraphics[width=1.0\linewidth]{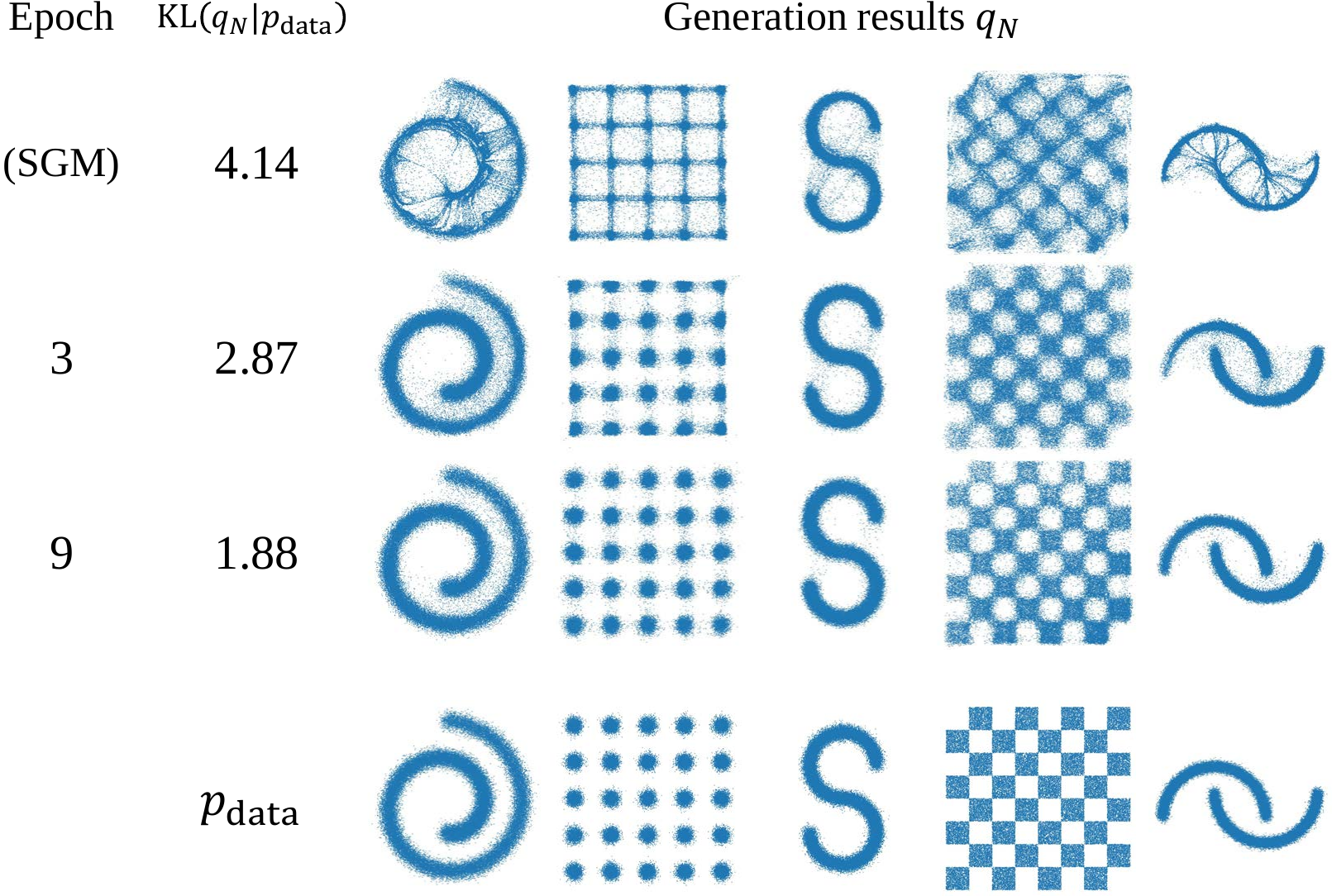}
    \caption{Improving pretrained SGM with S-DSB. First row illustrates the results of converged SGMs. KL divergences $\textup{KL}(q_N|p_\textup{data})$ measure the average generation performance of each rows.}
    \label{fig:improve_diffusion}
\end{minipage}
\hfill
\begin{minipage}[c]{0.3\linewidth}
    \centering
    \begin{subfigure}[b]{1.0\linewidth}
        \centering
        \includegraphics[width=1.0\linewidth]{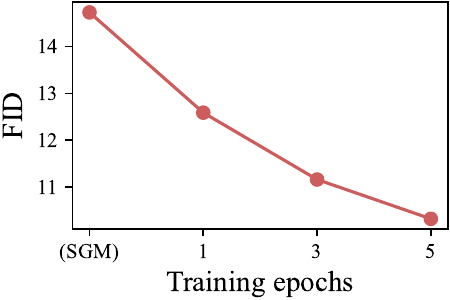}
        \caption{Dog$\rightarrow$Cat}
    \end{subfigure}

    \vspace{0.1cm}
    \begin{subfigure}[b]{1.0\linewidth}
        \centering
        \includegraphics[width=1.0\linewidth]{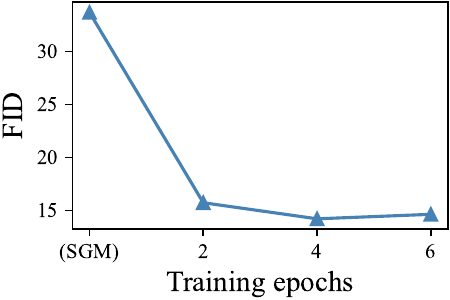}
        \caption{Cat$\rightarrow$Dog}
    \end{subfigure}
    
    \vskip -0.2cm
    \caption{FID comparison of image translation on the AFHQ dataset.}
    \label{fig:afhq_fid}
\end{minipage}%
\vspace{-0.5cm}
\end{figure}

We explore several critical factors that may impact the convergence of DSB, including training iterations per epoch $M$, and the overall training iterations. To gauge convergence, we measured the KL divergence between the output of the backward networks and the real data distribution $p_\textup{data}$. As depicted in Figure~\ref{fig:kl_iterations_per_epoch}, we subject all configurations to training over $10$ epochs. When $M$ is low, the DSB training fails to converge. As $M$ increases, the optimization process gradually approaches Equation~\ref{eq:DSB_definition}, ultimately yielding improved final performance.

Figure~\ref{fig:kl_iterations} elucidates the enhancement of sample quality throughout the training duration. We set the default $M$ as $32K$. We note that the performance of DSB stops improving after 320K iterations. Conversely, both our S-DSB and R-DSB demonstrate consistent improvements. Moreover, even with random initialization, R-DSB surpasses the vanilla DSB in terms of sample quality.

\begin{figure}[t]
    \centering
    \includegraphics[width=1.0\linewidth]{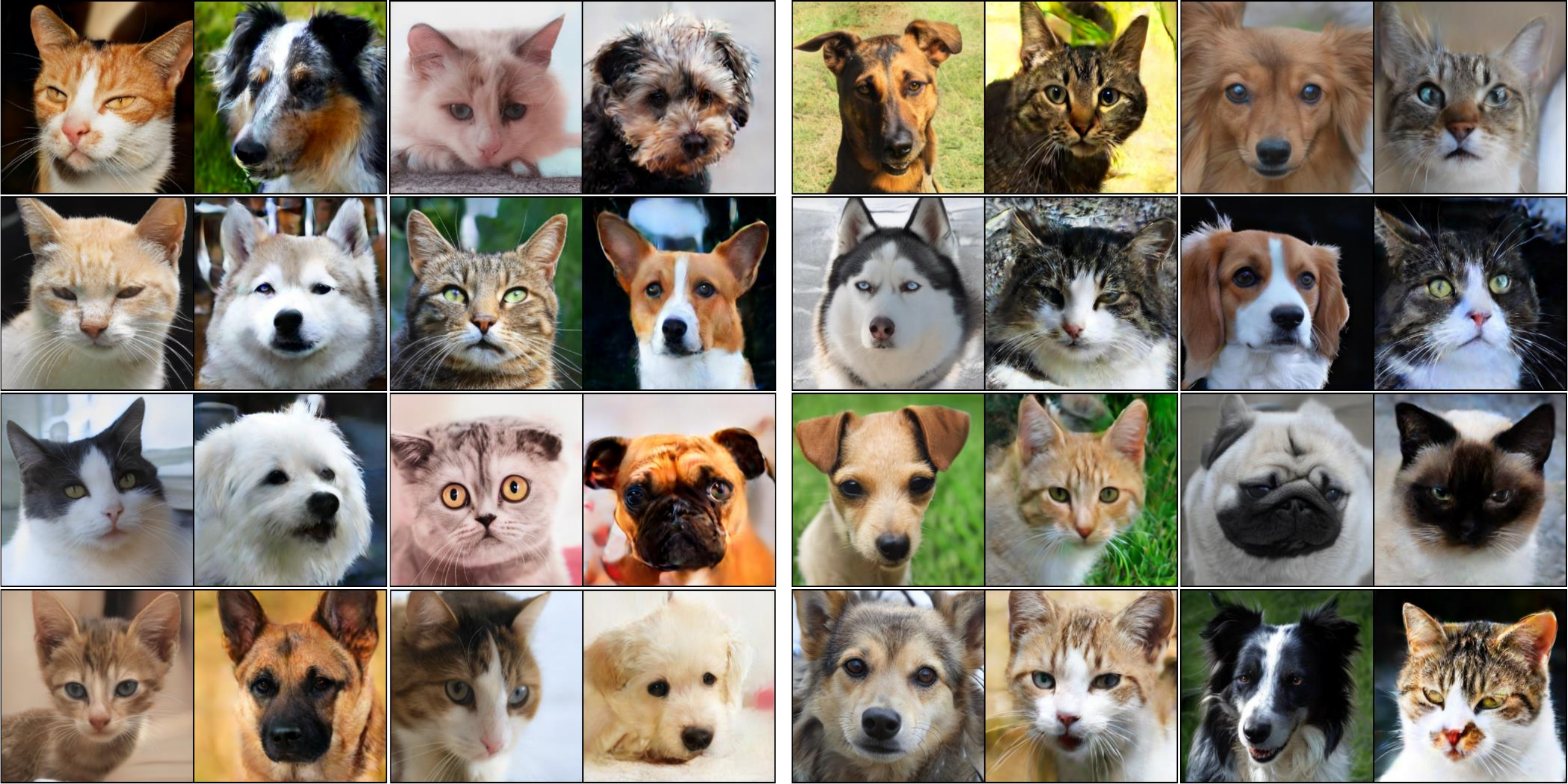}
    \caption{Generated samples on unpaired \texttt{dog} $\leftrightarrow$ \texttt{cat} translation. Left: \texttt{cat} to \texttt{dog}; right: \texttt{dog} to \texttt{cat}. We observe that DSB could preserve pose and texture to a certain extent.}
    \label{fig:afhq-cases}
\end{figure}

\subsection{Improving SGM}

\begin{wrapfigure}[10]{r}{0.5\textwidth}  
    \vspace{-1.5cm}
    \centering  
    \includegraphics[width=1.0\linewidth]{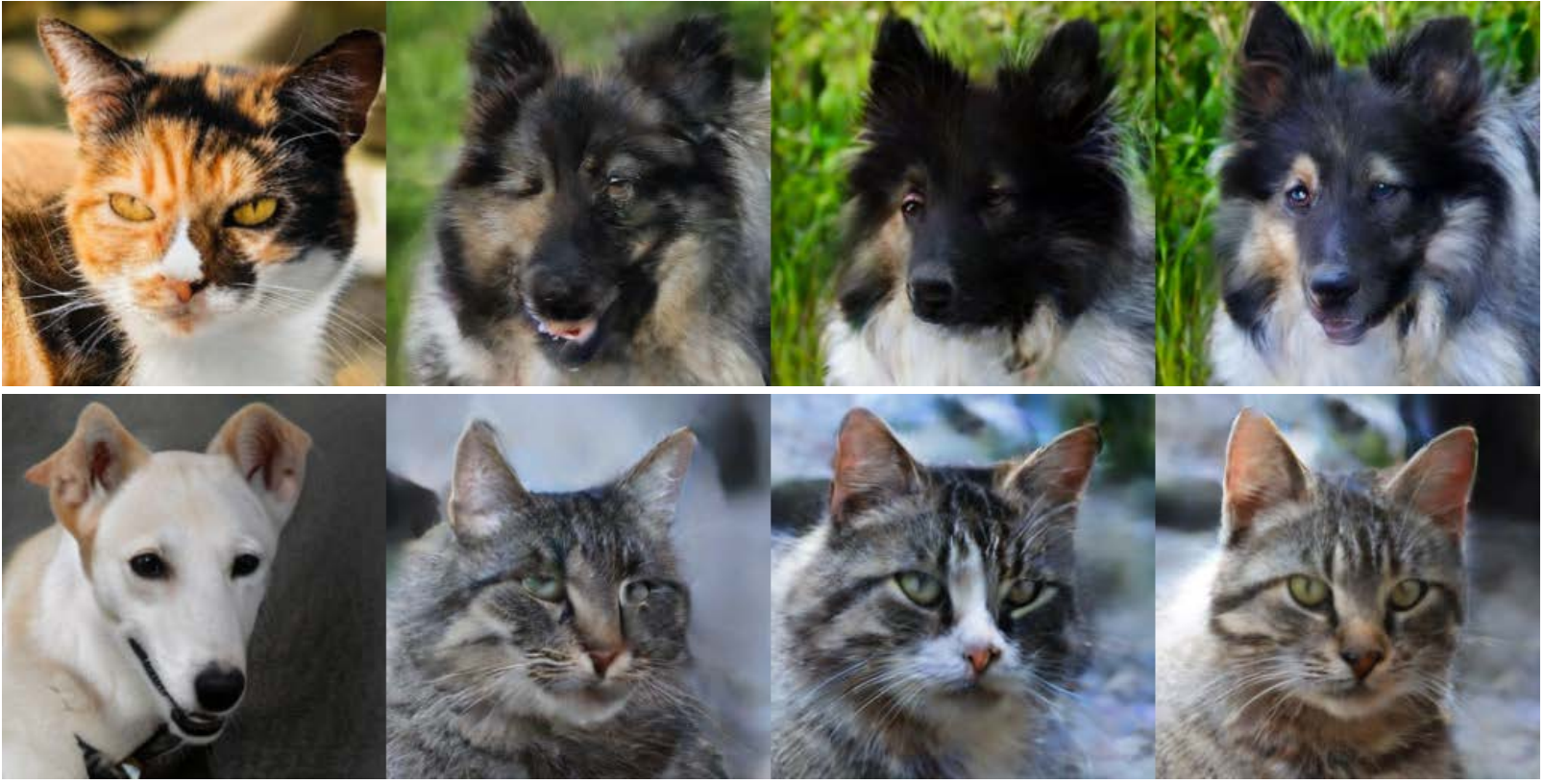}
    \caption{From left to right: input images, results from Flow Matching model, DSB 2-th epoch, and DSB 6-th epoch.}
    \label{fig:afhq-improve}
\end{wrapfigure} 

As discussed in Section~\ref{sec:diffusion_as_dsb}, the simplified training objective in Equation~\ref{eq:DSB_simplified_loss} enables leveraging a pretrained SGM as the initial epoch for S-DSB and further improves it. In Figure~\ref{fig:improve_diffusion}, we first train diffusion models on various data distributions until convergence, and then employ these models as the starting point of S-DSB training. As depicted, the initial performance is suboptimal, with visually discernible artifacts presented in generated samples. However, after few epochs of S-DSB training, the sample quality is greatly improved. With continued training, the results progressively approximate the ground truths.

Additionally, we conducted validation on the AFHQ~\cite{choi2020stargan} dataset to transfer between \texttt{dog} and \texttt{cat} subsets. We utilized a pretrained Flow Matching (FM)~\cite{lipman2022flow} model as our initial model and applied Terminal Reparameterization to train the DSB. Given that FM constrains $p_\textup{prior}$ to be Gaussian, it is understandable that the generated results may not be sufficiently good. As demonstrated in Figure~\ref{fig:afhq_fid} and Figure~\ref{fig:afhq-improve}, our model is capable of progressively generating improved results. This further corroborates the efficacy of DSB in enhancing pretrained SGMs. We provide more translation results in Figure~\ref{fig:afhq-cases}, which illustrate our method is capable of producing results with rich details and realistic textures.

\newpage

\subsection{$\gamma$ term in DSB}

\begin{wrapfigure}[6]{r}{0.27\linewidth}
    \vspace{-1.8cm}
    \centering
    \includegraphics[width=1.0\linewidth]{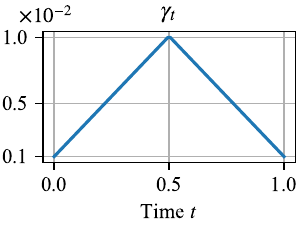}
    \vskip -0.25cm
    \caption{Linear symmetrical schedule for $\gamma_t$.}
    \label{fig:gamma_setting}
\end{wrapfigure}

We adopt a symmetrical schedule for $\gamma_t$ in our experiments, with values that linearly ascend from $10^{-3}$ at both sides to $10^{-2}$ at the center, as depicted in Figure~\ref{fig:gamma_setting}. To examine the influence of $\gamma$, we present visualizations of two transformation trajectories from \texttt{cat} to \texttt{dog} under varying $\gamma_t$ settings in Figure~\ref{fig:afhq-traj}. Our observations indicate that a smaller $\gamma_t$ tends to preserve more information throughout the trajectory, such as pose and color, whereas a larger $\gamma_t$ can potentially lead to high-quality generated results.

\begin{figure}[t]
    \centering
    \includegraphics[width=1.0\linewidth]{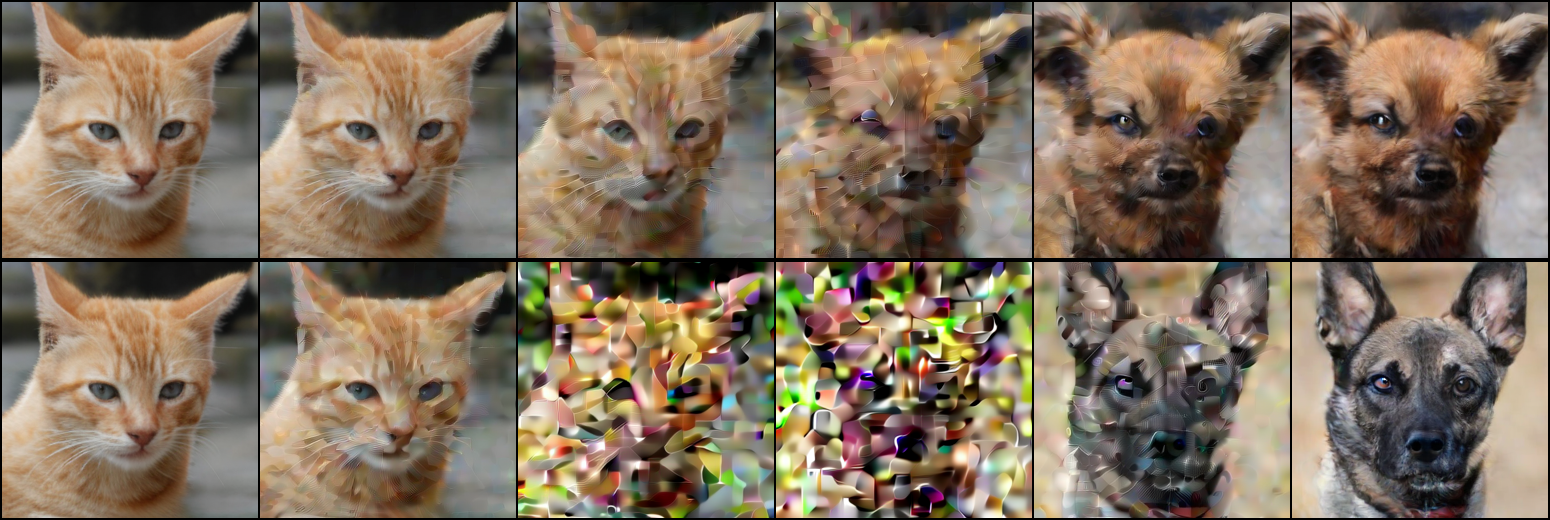}
    \caption{Generation trajectories with different $\gamma_t$. Taking the same \texttt{cat} image as input, the trajectory with smaller $\gamma_t$ can retain more details, such as pose and color. Above: $\gamma_t \in \text{linear}(10^{-4}, 10^{-3})$. Below: $\gamma_t \in \text{linear}(10^{-3}, 10^{-2})$.}
    \label{fig:afhq-traj}
    \vspace{-0.5cm}
\end{figure}

\section{Conclusion and Future Prospects}
This paper proposes Simplified Diffusion Schrödinger Bridge (S-DSB), which unifies the training objectives of Diffusion Schrödinger Bridge (DSB) and Score-based generative models (SGMs). By leveraging SGMs as initialization, S-DSB achieves faster convergence and improved performance. The iterative refinement process inherent in DSB further enhances the outcomes of SGM. Our investigation delves deeply into the practical training of DSB, identifying critical factors that influence its success. Moreover, we introduce reparameterization techniques tailored for DSB that yield superior performance in real-world applications.

The authors firmly believe that approaches based on the Schrödinger Bridge problem represent the forward path for future generative models, as previous studies~\cite{de2021diffusion,shi2024diffusion} have unveiled its close connection with the optimal transport theory. Consequently, DSB has the theoretical potential to possess stronger generation capabilities and a deeper understanding of semantics. However, the authors acknowledge that the current DSB may still lag behind SGMs due to the complexity of its training process or a lack of effective reparameterization strategy to alleviate difficulties encountered during network training. This paper makes exploratory strides toward this goal and hopes to inspire future works.

%
%
\bibliographystyle{splncs04}
\bibliography{main}

\newpage
\appendix

\section{Proof of Proposition~\ref{prop:SDSB}}

\begin{proof}
    By asssumption, we have
    \begin{equation}
        \begin{aligned}
            &\ \left(x_{k+1}+F_k^n(x_k)-F_k^n(x_{k+1})\right)-x_k \\
            =&\ \left(x_{k+1}+x_{k}+\gamma_{k+1}f_k^n(x_{k})-\left(x_{k+1}+\gamma_{k+1}f_k^n(x_{k+1})\right)\right)-x_k \\
            =&\ \left(x_{k}+\gamma_{k+1}f_k^n(x_{k})-\gamma_{k+1}f_k^n(x_{k+1})\right)-x_k \\
            =&\ \gamma_{k+1}f_k^n(x_{k})-\gamma_{k+1}f_k^n(x_{k+1}) \\
            =&\ f_k^n(x_{k})-f_k^n(x_{k+1}) \\
            \approx&\ 0,
        \end{aligned}
    \end{equation}
     and thus $\mathcal{L}_{B_{k+1}^n}^{'}\approx\mathcal{L}_{B_{k+1}^n}$. $\mathcal{L}_{F_k^{n+1}}^{'}\approx\mathcal{L}_{F_k^{n+1}}$ can also be proved likewise.
\end{proof}

\section{Proof of Proposition~\ref{prop:RDSB_prop}}

\subsection{Continuous-time DSB}

To prove Proposition~\ref{prop:RDSB_prop}, we first introduce continuous-time DSB. Given a reference measure $\mathbb{P}\in\mathscr{P}(\mathcal{C})$, the continuous-time SB problem aims to find

\begin{equation}
    \begin{aligned}
        \mathrm{\Pi}^*=\arg\min\{\textup{KL}(\mathrm{\Pi}|\mathbb{P}):\mathrm{\Pi}\in\mathscr{P}(\mathcal{C}),\mathrm{\Pi}_0=p_\textup{data},\mathrm{\Pi}_T=p_\textup{prior}\}.
    \end{aligned}
    \label{eq:SB_definition_continuous}
\end{equation}

Similar to Equation~\ref{eq:IPF_definition}, continuous-time IPF iterations $(\mathrm{\Pi}^n)_{n\in\mathbb{N}}$ where $\mathrm{\Pi}^0=\mathbb{P}$ for any $n\in\mathbb{N}$ can be defined in continuous time as

\begin{equation}
    \begin{aligned}
        \mathrm{\Pi}^{2n+1}&=\arg\min\{\textup{KL}(\mathrm{\Pi}|\mathrm{\Pi}^{2n}):\mathrm{\Pi}\in\mathscr{P}(\mathcal{C}),\mathrm{\Pi}_T=p_\textup{prior}\},\\
        \mathrm{\Pi}^{2n+2}&=\arg\min\{\textup{KL}(\mathrm{\Pi}|\mathrm{\Pi}^{2n+1}):\mathrm{\Pi}\in\mathscr{P}(\mathcal{C}),\mathrm{\Pi}_0=p_\textup{data}\}.
    \end{aligned}
    \label{eq:IPF_definition_continuous}
\end{equation}

DSB partitions the continuous time horizon $[0,T]$ into $N+1$ discrete timesteps $\{\bar{\gamma}_{0},\bar{\gamma}_{1},\ldots,\bar{\gamma}_{N}\}$ where $\bar{\gamma}_k=\sum_{n=1}^k\gamma_{n}$, $\bar{\gamma}_0=0$ and $\bar{\gamma}_N=T$. The forward process is $p_{\bar{\gamma}_{k+1}|\bar{\gamma}_{k}}^n=\mathcal{N}\left(x_{\bar{\gamma}_{k+1}};x_{\bar{\gamma}_{k}}+\gamma_{k+1}f_{\bar{\gamma}_{k}}^n(x_{\bar{\gamma}_{k}}),2\gamma_{k+1}\mathbf{I}\right)$ and the backward process is $q_{\bar{\gamma}_{k}|\bar{\gamma}_{k+1}}^n=\mathcal{N}\left(x_{\bar{\gamma}_{k}};x_{\bar{\gamma}_{k+1}}+\gamma_{k+1}b_{\bar{\gamma}_{k+1}}^n(x_{\bar{\gamma}_{k+1}}),2\gamma_{k+1}\mathbf{I}\right)$, where $f_{\bar{\gamma}_{k}}^n(x_{\bar{\gamma}_{k}})$ and $b_{\bar{\gamma}_{k+1}}^n(x_{\bar{\gamma}_{k+1}})$ are drift terms, and $p^n$ and $q^n$ are discretizations of $\mathrm{\Pi}^{2n}$ and $\mathrm{\Pi}^{2n+1}$, respectively. DSB uses two neural networks to approximate $B_{\beta^n}(\bar{\gamma}_k,x)\approx B_{\bar{\gamma}_k}^n(x)=x+\bar{\gamma}_kb_{\bar{\gamma}_k}^n(x)$ and $F_{\alpha^n}(\bar{\gamma}_k,x)\approx F_{\bar{\gamma}_k}^n(x)=x+\bar{\gamma}_{k+1}f_{\bar{\gamma}_{k}}^n(x)$.

\subsection{Posterior estimation}

First, we introduce some notations in continuous-time SDEs. We rewrite Equation~\ref{eq:SB_SDE} as

\begin{subequations}
    \begin{align}
        \mathrm{d}\mathbf{X}_t&=\left(f(\mathbf{X}_t,t)+g^2(t)\nabla\log\mathbf{\Psi}(\mathbf{X}_t,t)\right)\mathrm{d}t+g(t)\mathrm{d}\mathbf{W}_t, X_0\sim p_\textup{data} \nonumber \\
        &:=h(\mathbf{X}_t,t)\mathrm{d}t+g(t)\mathrm{d}\mathbf{W}_t, X_0\sim p_\textup{data}, \label{eq:SB_SDE_rewrite_1} \\
        \mathrm{d}\mathbf{X}_t&=\left(f(\mathbf{X}_t,t)-g^2(t)\nabla\log\mathbf{\hat{\Psi}}(\mathbf{X}_t,t)\right)\mathrm{d}t+g(t)\mathrm{d}\mathbf{\bar{W}}_t, X_0\sim p_\textup{data} \nonumber \\
        &:=l(\mathbf{X}_t,t)\mathrm{d}t+g(t)\mathrm{d}\mathbf{\bar{W}}_t, X_1\sim p_\textup{prior}, \label{eq:SB_SDE_rewrite_2}
    \end{align}
    \label{eq:SB_SDE_rewrite}
\end{subequations}

\noindent where $h(\mathbf{X}_t,t)$ and $l(\mathbf{X}_t,t)$ are drift terms. Given the forward trajectory $x_{0:N}\sim p^n(x_{0:N})$ and the backward trajectory $x'_{0:N}\sim q^n(x_{0:N})$, $\{x_{\bar{\gamma}_i}:=x_{i}\}_{i=0}^{N}$ and $\{x'_{\bar{\gamma}_i}:=x'_{i}\}_{i=0}^{N}$ denote the trajectories of Equation~\ref{eq:SB_SDE_rewrite_1} and Equation~\ref{eq:SB_SDE_rewrite_2} at time $\{\bar{\gamma}_0,\bar{\gamma}_1,\ldots,\bar{\gamma}_N\}$, respectively. For simplicity, we denote the integration $\int_a^bh(\mathbf{X}_t,t)$ and $\int_a^bl(\mathbf{X}_t,t)$ as $\tilde{h}(a,b)$ and $\tilde{l}(a,b)$, respectively.

Here we give a full version of Proposition~\ref{prop:RDSB_prop}:

\begin{proposition}
    Assume $\bar{\gamma}_{N}=1$. Given the forward and backward trajectories $x_{0:N}\sim p^n(x_{0:N})=p_0^n(x_0)\prod_{k=0}^{N-1}p_{k+1|k}^n(x_{k+1}|x_k)$ and $x_{0:N}'\sim q^n(x_{0:N})=q_N^n(x_N)\prod_{k=0}^{N-1}p_{k|k+1}^n(x_k|x_{k+1})$. Assume $f_t(\mathbf{X}_t)=-\alpha\mathbf{X}_t$, $g(t)=\sqrt{2}$,
    \begin{equation}
        \begin{aligned}
            \frac{1}{\bar{\gamma}_{k}}\tilde{h}(0,\bar{\gamma}_k)\approx\frac{1}{\bar{\gamma}_{k+1}}\tilde{h}(0,\bar{\gamma}_{k+1}), \frac{1}{1-\bar{\gamma}_{k}}\tilde{l}(\bar{\gamma}_{k},1)\approx\frac{1}{1-\bar{\gamma}_{k+1}}\tilde{l}(\bar{\gamma}_{k+1},1).
        \end{aligned}
        \label{eq:RDSB_assumption}
    \end{equation}
    We have
    \begin{subequations}
        \begin{align}
            & q_{k|k+1}^n(x_{k}|x_{k+1})\approx p_{k|k+1,0}^n(x_{k}|x_{k+1},x_0) = \mathcal{N}(x_{k};\mu_{k+1}^n(x_{k+1},x_0),\sigma_{k+1}\mathbf{I}), \label{eq:RDSB_prop_1_a} \\
            & p_{k+1|k}^{n+1}(x_{k+1}|x_{k})\approx q_{k+1|k,N}^n(x_{k+1}'|x_k',x_N') = \mathcal{N}(x_{k+1}';\tilde{\mu}_k^n(x_k',x_N'),\tilde{\sigma}_{k+1}\mathbf{I}), \label{eq:RDSB_prop_1_b}
        \end{align}
        \label{eq:RDSB_prop_1}
    \end{subequations}
    \begin{subequations}
        \begin{align}
            & \mu_{k+1}^n(x_{k+1},x_0)\approx x_{k+1}+\frac{\gamma_{k+1}}{\bar{\gamma}_{k+1}}(x_0-x_{k+1}), \sigma_{k+1}=\frac{2\gamma_{k+1}\bar{\gamma}_{k}}{\bar{\gamma}_{k+1}}, \label{eq:RDSB_prop_2_a} \\
            & \tilde{\mu}^n_k(x_k',x_N')\approx x_k'+\frac{\gamma_{k+1}}{1-\bar{\gamma}_{k}}(x_N'-x_k'), \tilde{\sigma}_{k+1}=\frac{2\gamma_{k+1}(1-\bar{\gamma}_{k+1})}{1-\bar{\gamma}_k}. \label{eq:RDSB_prop_2_b}
        \end{align}
        \label{eq:RDSB_prop_2}
    \end{subequations}
    \label{prop:RDSB_prop_full}
\end{proposition}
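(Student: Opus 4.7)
The plan is to derive the forward and backward posteriors by Bayes' rule applied to Gaussian marginals obtained from the continuous-time SDE Eq.~\ref{eq:SB_SDE_rewrite}. Concretely, treating the integrated drift $\tilde{h}$ along a sample path as effectively deterministic (its dependence on the trajectory is absorbed into the approximation), the Itô increments with $g(t)=\sqrt{2}$ yield three Gaussian densities for the forward direction: the marginals $p_{k|0}^n(x_k|x_0)\approx \mathcal{N}(x_k;x_0+\tilde{h}(0,\bar{\gamma}_k),2\bar{\gamma}_k\mathbf{I})$ and $p_{k+1|0}^n(x_{k+1}|x_0)\approx \mathcal{N}(x_{k+1};x_0+\tilde{h}(0,\bar{\gamma}_{k+1}),2\bar{\gamma}_{k+1}\mathbf{I})$, plus the one-step transition $p_{k+1|k}^n(x_{k+1}|x_k)\approx \mathcal{N}(x_{k+1};x_k+\tilde{h}(\bar{\gamma}_k,\bar{\gamma}_{k+1}),2\gamma_{k+1}\mathbf{I})$. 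The posterior $p_{k|k+1,0}^n(x_k|x_{k+1},x_0) \propto p_{k+1|k}^n(x_{k+1}|x_k)\, p_{k|0}^n(x_k|x_0)$ is then Gaussian in $x_k$ by conjugacy.

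For Eq.~\ref{eq:RDSB_prop_2_a}, completing the square adds precisions $\frac{1}{2\gamma_{k+1}}+\frac{1}{2\bar{\gamma}_k}=\frac{\bar{\gamma}_{k+1}}{2\gamma_{k+1}\bar{\gamma}_k}$, giving $\sigma_{k+1}=2\gamma_{k+1}\bar{\gamma}_k/\bar{\gamma}_{k+1}$ directly. The raw mean is the affine combination of $x_0+\tilde{h}(0,\bar{\gamma}_k)$ and $x_{k+1}-\tilde{h}(\bar{\gamma}_k,\bar{\gamma}_{k+1})$ with weights $\gamma_{k+1}/\bar{\gamma}_{k+1}$ and $\bar{\gamma}_k/\bar{\gamma}_{k+1}$ respectively. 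This is where assumption Eq.~\ref{eq:RDSB_assumption} enters: writing $\tilde{h}(0,\bar{\gamma}_k)\approx \bar{\gamma}_k c$ and $\tilde{h}(0,\bar{\gamma}_{k+1})\approx \bar{\gamma}_{k+1} c$ for a common average-rate $c$, subtraction gives $\tilde{h}(\bar{\gamma}_k,\bar{\gamma}_{k+1})\approx \gamma_{k+1} c$; the $c$-terms cancel identically, leaving $\mu_{k+1}^n(x_{k+1},x_0)\approx x_{k+1}+\frac{\gamma_{k+1}}{\bar{\gamma}_{k+1}}(x_0-x_{k+1})$.

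The backward claim Eq.~\ref{eq:RDSB_prop_2_b} is derived symmetrically from the reverse SDE Eq.~\ref{eq:SB_SDE_rewrite_2} with drift $l$: the marginals $q_{k|N}^n$ and $q_{k+1|N}^n$ have variances $2(1-\bar{\gamma}_k)\mathbf{I}$ and $2(1-\bar{\gamma}_{k+1})\mathbf{I}$ (using $\bar{\gamma}_N=1$), and the one-step $q_{k+1|k}^n$ has variance $2\gamma_{k+1}\mathbf{I}$. Bayes' rule combined with the second half of Eq.~\ref{eq:RDSB_assumption} then yields $\tilde{\sigma}_{k+1}=2\gamma_{k+1}(1-\bar{\gamma}_{k+1})/(1-\bar{\gamma}_k)$ and $\tilde{\mu}^n_k(x_k',x_N')\approx x_k'+\frac{\gamma_{k+1}}{1-\bar{\gamma}_k}(x_N'-x_k')$. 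Finally, the approximations $q_{k|k+1}^n\approx p_{k|k+1,0}^n$ and $p_{k+1|k}^{n+1}\approx q_{k+1|k,N}^n$ in Eq.~\ref{eq:RDSB_prop_1} rest on the IPF fixed-point property (Proposition~\ref{prop:dsb_convergence}): at convergence of a DSB epoch, the reverse transition of the bridge coincides with the forward posterior conditioned on the appropriate trajectory endpoint, and likewise for the next forward epoch.

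The main obstacle will be making rigorous the linearization of the nonlinear drift $h(x,t)=f(x,t)+g^2(t)\nabla\log\mathbf{\Psi}(x,t)$ so that the marginal $p_{k|0}^n$ is genuinely Gaussian with the claimed mean and variance. Assumption Eq.~\ref{eq:RDSB_assumption} is precisely the coarse-grained substitute needed: it asserts that the average integrated drift per unit time is essentially constant across the nested windows $[0,\bar{\gamma}_k]\subset[0,\bar{\gamma}_{k+1}]$ (and symmetrically $[\bar{\gamma}_{k+1},1]\subset[\bar{\gamma}_k,1]$ for the reverse direction), which is exactly what makes the drift contributions in the posterior mean cancel. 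The satisfying consequence is that all path-dependence on $\mathbf{\Psi},\mathbf{\hat{\Psi}}$ disappears, collapsing the expression to the clean DDPM-style endpoint interpolation Eq.~\ref{eq:RDSB_prop_2} independent of those potentials.
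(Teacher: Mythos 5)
Your proposal is correct and follows essentially the same route as the paper's proof: the Bayes/conjugacy computation of the Gaussian posterior corresponds to the paper's Lemma~\ref{lem:RDSB_prop_1_lemma_1}, your ``constant average-rate $c$'' cancellation of the integrated drift is exactly the content of Lemma~\ref{lem:RDSB_prop_1_lemma_2}, and your appeal to the IPF fixed-point to identify $q^n_{k|k+1}$ with the endpoint-conditioned posterior is the content of Lemma~\ref{lem:RDSB_prop_1_lemma_0} (which the paper justifies via the reconstruction/prior-matching/denoising-matching decomposition of the KL). The only cosmetic difference is that the paper keeps the residual drift terms explicit in the posterior mean and cancels them in a separate lemma, whereas you substitute the constant-rate form up front.
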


It is worth noticing that $f_t(\mathbf{X}_t)=-\alpha\mathbf{X}_t$ and $g(t)=\sqrt{2}$ are assumptions of the original DSB~\cite{de2021diffusion}. We prove Proposition~\ref{prop:RDSB_prop_full} with the following lemmas.

\begin{lemma}
    Given the forward and backward trajectories $x_{0:N}\sim p^n(x_{0:N})=p_0^n(x_0)\prod_{k=0}^{N-1}p_{k+1|k}^n(x_{k+1}|x_k)$ and $x_{0:N}'\sim q^n(x_{0:N})=q_N^n(x_N)\prod_{k=0}^{N-1}p_{k|k+1}^n(x_k|x_{k+1})$. Suppose the IPF in Equation~\ref{eq:IPF_definition_continuous} achieve the optimal state. Then we have
    \begin{subequations}
        \begin{align}
            & q_{k|k+1}^n(x_{k}|x_{k+1})\approx p_{k|k+1,0}^n(x_{k}|x_{k+1},x_0), \label{eq:RDSB_prop_1_a_lemma_0} \\
            & p_{k+1|k}^{n+1}(x_{k+1}|x_{k})\approx q_{k+1|k,N}^n(x_{k+1}'|x_k',x_N'). \label{eq:RDSB_prop_1_b_lemma_0}
        \end{align}
        \label{eq:RDSB_prop_1_lemma_0}
    \end{subequations}
    \label{lem:RDSB_prop_1_lemma_0}
\end{lemma}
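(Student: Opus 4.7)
The plan is to exploit IPF optimality to identify the forward and backward joint laws as a common measure $\pi^*$, then reduce the two asserted approximations to a slowness/smoothness statement about ratios of bridge densities that is controlled by Assumption~\ref{eq:RDSB_assumption}. I will sketch Equation~\ref{eq:RDSB_prop_1_a_lemma_0} explicitly; Equation~\ref{eq:RDSB_prop_1_b_lemma_0} then follows by the symmetric argument that swaps $(p^n, x_0, h, p_\textup{data})$ for $(q^n, x_N, l, p_\textup{prior})$ and runs Bayes' rule on the backward chain in place of the forward chain.

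First, I would invoke optimality of the continuous-time IPF in Equation~\ref{eq:IPF_definition_continuous} to conclude $p^n(x_{0:N}) = q^n(x_{0:N})$ as joint densities on the discretized path space. Applying Bayes' rule on the forward chain, together with the Markov property $p^n(x_{k+1}|x_k, x_0) = p^n_{k+1|k}(x_{k+1}|x_k)$, gives
\begin{equation*}
    q^n_{k|k+1}(x_k|x_{k+1}) = \frac{p^n_{k+1|k}(x_{k+1}|x_k)\, p^n_k(x_k)}{p^n_{k+1}(x_{k+1})}, \qquad p^n_{k|k+1,0}(x_k|x_{k+1}, x_0) = \frac{p^n_{k+1|k}(x_{k+1}|x_k)\, p^n_{k|0}(x_k|x_0)}{p^n_{k+1|0}(x_{k+1}|x_0)}.
\end{equation*}
The shared Gaussian likelihood factor $p^n_{k+1|k}(x_{k+1}|x_k)$ cancels, so Equation~\ref{eq:RDSB_prop_1_a_lemma_0} reduces to the approximate identity $p^n_k(x_k)/p^n_{k+1}(x_{k+1}) \approx p^n_{k|0}(x_k|x_0)/p^n_{k+1|0}(x_{k+1}|x_0)$ between the ratios of marginal and bridge densities at adjacent timesteps.

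Second, I would exploit the continuous-time SDE in Equations~\ref{eq:SB_SDE_rewrite_1}--\ref{eq:SB_SDE_rewrite_2} under the stated OU-type assumptions $f_t(\mathbf{X}_t) = -\alpha \mathbf{X}_t$, $g(t) = \sqrt{2}$. Integrating the forward SDE gives $p^n_{k|0}(\cdot|x_0)$ as an approximately Gaussian kernel with mean $x_0 + \tilde h(0, \bar\gamma_k)$ and variance proportional to $\bar\gamma_k$, and the corresponding marginal $p^n_k$ is its average against $p_\textup{data}$. Assumption~\ref{eq:RDSB_assumption}, which states that the time-averaged drifts $\tilde h(0,\bar\gamma_k)/\bar\gamma_k$ and $\tilde h(0,\bar\gamma_{k+1})/\bar\gamma_{k+1}$ agree, is precisely what is needed to ensure that the shift in the bridge mean between times $\bar\gamma_k$ and $\bar\gamma_{k+1}$ is linear in elapsed time to leading order. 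Substituting these Gaussian bridge kernels into both sides of the reduced identity and simplifying then yields not only the qualitative approximation of Equation~\ref{eq:RDSB_prop_1_a_lemma_0} but, after completing the square, the explicit mean and variance $x_{k+1} + (\gamma_{k+1}/\bar\gamma_{k+1})(x_0 - x_{k+1})$ and $2\gamma_{k+1}\bar\gamma_k/\bar\gamma_{k+1}$ asserted in Equation~\ref{eq:RDSB_prop_2_a}.

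The main obstacle lies in quantifying the error in the reduced identity: one must show that the Jensen-type gap between $\log p^n_k(x_k)$ and $\log p^n_{k|0}(x_k|x_0)$, and similarly between $\log p^n_{k+1}(x_{k+1})$ and $\log p^n_{k+1|0}(x_{k+1}|x_0)$, differ by a term of order $o(\gamma_{k+1})$ so that the two log-ratios cancel to leading order. Assumption~\ref{eq:RDSB_assumption} and the OU structure are exactly what forces both the bridge-kernel shift and the marginal drift to be linear in time at this order, which in turn controls the Jensen gap and matches the precision of the other approximations (Gaussian forward transitions, first-order expansion of the drift) already used to derive the vanilla DSB training loss in Equation~\ref{eq:DSB_original_loss}. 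The backward half, Equation~\ref{eq:RDSB_prop_1_b_lemma_0}, is handled in parallel by applying Bayes' rule on the backward chain $q^n$ and invoking the second half of Assumption~\ref{eq:RDSB_assumption} to control the reverse drift $\tilde l$.
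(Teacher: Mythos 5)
Your route is genuinely different from the paper's. The paper proves this lemma in three lines by rewriting the IPF objective $\textup{KL}(q^n|p^n)$ with the forward chain factored through $x_0$ (Equation~\ref{eq:IPF_custom_decomposition}), obtaining an ELBO-style sum of reconstruction, prior-matching, and denoising-matching terms $\log\frac{q^n(x_k|x_{k+1})}{p^n(x_k|x_{k+1},x_0)}$; at the minimizer the matching terms are (approximately) annihilated, which is exactly Equation~\ref{eq:RDSB_prop_1_a_lemma_0}. No SDE structure, no drift assumptions, and no Gaussian computations enter at this stage --- those are deferred to Lemma~\ref{lem:RDSB_prop_1_lemma_1} and Lemma~\ref{lem:RDSB_prop_1_lemma_2}. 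Your Bayes-rule reduction, by contrast, cancels the shared likelihood $p^n_{k+1|k}(x_{k+1}|x_k)$ and reduces the claim to $p^n_k(x_k)/p^n_{k+1}(x_{k+1}) \approx p^n_{k|0}(x_k|x_0)/p^n_{k+1|0}(x_{k+1}|x_0)$. That reduction is algebraically sound given Markovianity, and it is arguably more informative about where the approximation error lives, but it imports the analytic content of the later lemmas into a statement the paper keeps purely information-theoretic.

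Two concrete problems remain. First, your opening claim that IPF optimality yields $p^n(x_{0:N}) = q^n(x_{0:N})$ is too strong: the half-step projection onto $\{\pi_N = p_\textup{prior}\}$ gives $q^n(x_{0:N}) = p_\textup{prior}(x_N)\,p^n(x_{0:N-1}|x_N)$, i.e.\ equality of the backward transitions $q^n_{k|k+1} = p^n_{k|k+1}$, not of the joint laws, unless $p^n_N$ already equals $p_\textup{prior}$. The weaker, correct statement is all you actually use, so this is repairable, but it should be stated correctly. Second, and more seriously, the reduced identity --- equivalently, that $\log\bigl(p^n_k(x_k)/p^n_{k|0}(x_k|x_0)\bigr)$ changes negligibly between adjacent timesteps --- is the entire content of the lemma after your cancellation, and you do not establish it: you name it as ``the main obstacle'' and then assert that Assumption~\ref{eq:RDSB_assumption} is precisely what is needed. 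It is not. Assumption~\ref{eq:RDSB_assumption} constrains the time-averaged drift integrals $\tilde h$ and $\tilde l$ and is consumed in Lemma~\ref{lem:RDSB_prop_1_lemma_2} to simplify the posterior mean of Equation~\ref{eq:RDSB_prop_2_a_lemma_1}; it says nothing about the discrepancy between the marginal $p^n_k$ (a mixture of bridge kernels over $x_0\sim p_\textup{data}$) and the single bridge kernel $p^n_{k|0}(\cdot|x_0)$, which is governed by the spread of $p_\textup{data}$ and the regularity of $p^n_k$ in $k$, not by the drift. The paper's ELBO argument sidesteps this entirely, which is why it needs no such hypothesis here.
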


\begin{proof}
    We first prove Equation~\ref{eq:RDSB_prop_1_a_lemma_0}. In $(2n+1)$-th iteration, IPF aims to solve
    \begin{equation}
        \label{eq:IPF_custom_decomposition}
        \begin{aligned}
        \textup{KL}\left(q^n|p^{n}\right)&=\mathbb{E}_{q^n}\left(\log\frac{q^n(x_N)\prod_{k=0}^{N-1}q^n(x_{k}|x_{k+1})}{p^n(x_0)p^n(x_N|x_0)\prod_{k=1}^{N-1}p^n(x_{k}|x_{k+1},x_0)}\right)\\
            &=\mathbb{E}_{q^n}\bigg(\underbrace{\log\frac{q^n(x_0|x_1)}{p^n(x_0)}}_\textup{reconstruction}+\underbrace{\log\frac{q^n(x_N)}{p^n(x_N|x_0)}}_\textup{prior matching}+\sum_{k=1}^{N-1}\underbrace{\log\frac{q^n(x_{k}|x_{k+1})}{p^n(x_{k}|x_{k+1},x_0)}}_\textup{denoising matching}\bigg).
        \end{aligned}
    \end{equation}
    On the minimization of Equation~\ref{eq:IPF_custom_decomposition}, we have the denoising matching terms to be zero and thus Equation~\ref{eq:RDSB_prop_1_a_lemma_0}. We can prove Equation~\ref{eq:RDSB_prop_1_b_lemma_0} likewise.
\end{proof}

\begin{lemma}
    Assume $\bar{\gamma}_{N}=1$. Given the forward trajectory $x_{0:N}\sim p^n(x_{0:N})=p_0^n(x_0)\prod_{k=0}^{N-1}p_{k+1|k}^n(x_{k+1}|x_k)$ and the and backward trajectory $x_{0:N}'\sim q^n(x_{0:N})=q_N^n(x_N)\prod_{k=0}^{N-1}p_{k|k+1}^n(x_k|x_{k+1})$. Assume $f_t(\mathbf{X}_t)=-\alpha\mathbf{X}_t$ and $g(t)=\sqrt{2}$. We have
    \begin{subequations}
        \begin{align}
            & p_{k|k+1,0}^n(x_{k}|x_{k+1},x_0) = \mathcal{N}(x_{k};\mu_{k+1}^n(x_{k+1},x_0),\sigma_{k+1}\mathbf{I}), \label{eq:RDSB_prop_1_a_lemma_1} \\
            & q_{k+1|k,N}^n(x_{k+1}'|x_k',x_N') = \mathcal{N}(x_{k+1}';\tilde{\mu}_k^n(x_k',x_N'),\tilde{\sigma}_{k+1}\mathbf{I}), \label{eq:RDSB_prop_1_b_lemma_1}
        \end{align}
        \label{eq:RDSB_prop_1_lemma_1}
    \end{subequations}
    where
    \begin{subequations}
        \begin{align}
            & \mu_{k+1}^n(x_{k+1},x_0)=x_{k+1}+\frac{\gamma_{k+1}}{\bar{\gamma}_{k+1}}(x_0-x_{k+1})-\tilde{h}(\bar{\gamma}_k,\bar{\gamma}_{k+1})+\frac{\gamma_{k+1}}{\bar{\gamma}_{k+1}}\tilde{h}(0,\bar{\gamma}_{k+1}), \nonumber \\
            & \sigma_{k+1}=\frac{2\gamma_{k+1}\bar{\gamma}_{k}}{\bar{\gamma}_{k+1}}, \label{eq:RDSB_prop_2_a_lemma_1} \\
            & \tilde{\mu}^n_k(x_k',x_N')=x_k'+\frac{\gamma_{k+1}}{1-\bar{\gamma}_{k}}(x_N'-x_k')+\tilde{l}(\bar{\gamma}_{k},\bar{\gamma}_{k+1})-\frac{\gamma_{k+1}}{1-\bar{\gamma}_{k}}\tilde{l}(\bar{\gamma}_{k},1), \nonumber \\
            & \tilde{\sigma}_{k+1}=\frac{2\gamma_{k+1}(1-\bar{\gamma}_{k+1})}{1-\bar{\gamma}_k}. \label{eq:RDSB_prop_2_b_lemma_1}
        \end{align}
        \label{eq:RDSB_prop_2_lemma_1}
    \end{subequations}
    \label{lem:RDSB_prop_1_lemma_1}
\end{lemma}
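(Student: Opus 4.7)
The strategy is to mimic the DDPM-style posterior derivation, applied now to the continuous-time SDE in Equation~\ref{eq:SB_SDE_rewrite_1} rather than to the one-step kernel. By the Markov property of the forward chain,
\[
p^n_{k|k+1,0}(x_k\mid x_{k+1},x_0) \;\propto\; p^n_{k+1|k}(x_{k+1}\mid x_k)\, p^n_{k|0}(x_k\mid x_0),
\]
so the task reduces to identifying the two factors on the right as Gaussians and multiplying. The one-step transition $p^n_{k+1|k}(\cdot\mid x_k) = \mathcal{N}(x_k + \tilde h(\bar\gamma_k,\bar\gamma_{k+1}),\,2\gamma_{k+1} I)$ is Gaussian by the Euler--Maruyama construction together with $g(t)=\sqrt 2$. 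For $p^n_{k|0}(\cdot\mid x_0)$ I would integrate Equation~\ref{eq:SB_SDE_rewrite_1} from $0$ to $\bar\gamma_k$: because $g$ is constant, the Brownian contribution is $\sqrt 2\,\mathbf W_{\bar\gamma_k}$ with covariance $2\bar\gamma_k I$, and treating the path integral $\tilde h(0,\bar\gamma_k)$ as a deterministic endpoint-conditional quantity gives $x_k\mid x_0 \sim \mathcal N(x_0 + \tilde h(0,\bar\gamma_k),\,2\bar\gamma_k I)$.

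The rest is a Gaussian-product calculation. The precision of the posterior in $x_k$ is $\tfrac{1}{2\gamma_{k+1}} + \tfrac{1}{2\bar\gamma_k} = \tfrac{\bar\gamma_{k+1}}{2\gamma_{k+1}\bar\gamma_k}$, giving exactly $\sigma_{k+1} = \tfrac{2\gamma_{k+1}\bar\gamma_k}{\bar\gamma_{k+1}}$. The precision-weighted mean works out to
\[
\mu_{k+1}^n = \frac{\bar\gamma_k\bigl(x_{k+1}-\tilde h(\bar\gamma_k,\bar\gamma_{k+1})\bigr) + \gamma_{k+1}\bigl(x_0+\tilde h(0,\bar\gamma_k)\bigr)}{\bar\gamma_{k+1}},
\]
which I would simplify using $\bar\gamma_k = \bar\gamma_{k+1}-\gamma_{k+1}$ to isolate the baseline $x_{k+1} + \tfrac{\gamma_{k+1}}{\bar\gamma_{k+1}}(x_0-x_{k+1})$, and the additivity $\tilde h(0,\bar\gamma_{k+1}) = \tilde h(0,\bar\gamma_k) + \tilde h(\bar\gamma_k,\bar\gamma_{k+1})$ to collect the remaining drift corrections into the form $-\tilde h(\bar\gamma_k,\bar\gamma_{k+1}) + \tfrac{\gamma_{k+1}}{\bar\gamma_{k+1}}\tilde h(0,\bar\gamma_{k+1})$ that appears in Equation~\ref{eq:RDSB_prop_2_a_lemma_1}.

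The backward identity Equation~\ref{eq:RDSB_prop_2_b_lemma_1} is then a mirror-image calculation on Equation~\ref{eq:SB_SDE_rewrite_2}. Integrating the backward SDE over $[\bar\gamma_{k+1}, 1]$ and $[\bar\gamma_k, 1]$ (using $\bar\gamma_N = 1$) gives $x_{k+1}'\mid x_N'$ and $x_k'\mid x_N'$ Gaussian with variances $2(1-\bar\gamma_{k+1}) I$ and $2(1-\bar\gamma_k) I$; combining the former with the one-step backward kernel $q^n_{k|k+1}$ via Bayes yields a Gaussian posterior in $x_{k+1}'$ whose precision $\tfrac{1}{2\gamma_{k+1}}+\tfrac{1}{2(1-\bar\gamma_{k+1})} = \tfrac{1-\bar\gamma_k}{2\gamma_{k+1}(1-\bar\gamma_{k+1})}$ inverts to $\tilde\sigma_{k+1}$. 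The mean is obtained by the same precision-weighted combination and algebraic bookkeeping, with $\bar\gamma_k$ replaced by $1-\bar\gamma_k$ and with $\tilde l$ in place of $\tilde h$.

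The main obstacle, and what distinguishes this lemma from a pure DDPM derivation, is that the marginals $p^n_{k|0}$ and $q^n_{k|N}$ are not literally Gaussian: the drift $h = -\alpha x + 2\nabla\log\mathbf\Psi$ is nonlinear in $x$, so the path integral $\tilde h(0,\bar\gamma_k)$ is a random functional of the whole trajectory rather than a deterministic function of the endpoints. The lemma's calculation is legitimate only under the convention---implicit here, and made explicit by Equation~\ref{eq:RDSB_assumption} in the surrounding proposition---that this integral may be frozen at its endpoint-conditional expectation, which is consistent with the paper's overarching Euler--Maruyama identification of the discrete- and continuous-time DSB. Once that convention is adopted, the remaining work is bookkeeping on Gaussian densities.
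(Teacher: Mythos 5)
Your proposal follows essentially the same route as the paper's proof: both apply Bayes' rule to write the posterior as $p_{k+1|k}\,p_{k|0}/p_{k+1|0}$, identify each factor as a Gaussian whose mean is shifted by the corresponding drift integral $\tilde h$ and whose variance accumulates as $2\gamma_{k+1}$, $2\bar\gamma_k$, $2\bar\gamma_{k+1}$, and then complete the square (equivalently, take the precision-weighted Gaussian product) to obtain the stated mean and variance, using the additivity $\tilde h(0,\bar\gamma_{k+1})=\tilde h(0,\bar\gamma_k)+\tilde h(\bar\gamma_k,\bar\gamma_{k+1})$ to reach the final form, with the mirror-image argument for the backward kernel. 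Your closing caveat---that the multi-step marginals are Gaussian only under the convention that the path integrals $\tilde h$, $\tilde l$ are frozen at endpoint-conditional values---is a fair point that the paper leaves implicit.
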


\begin{proof}
    We first prove Equation~\ref{eq:RDSB_prop_1_a_lemma_1} and Equation~\ref{eq:RDSB_prop_2_a_lemma_1}. Denote for simplicity
    \begin{equation}
        \begin{aligned}
            &\tilde{h}_1=\tilde{h}(\bar{\gamma}_k,\bar{\gamma}_{k+1})=\int_{\bar{\gamma}_k}^{\bar{\gamma}_{k+1}}\left(f(x_t,t)+g^2(t)\nabla\log\mathbf{\Psi}(x_t,t)\right)\mathrm{d}t, \\
            &\tilde{h}_2=\tilde{h}(\bar{\gamma}_0,\bar{\gamma}_{k})=\int_{\bar{\gamma}_0}^{\bar{\gamma}_{k}}\left(f(x_t,t)+g^2(t)\nabla\log\mathbf{\Psi}(x_t,t)\right)\mathrm{d}t, \\
            &\tilde{h}_3=\tilde{h}(\bar{\gamma}_0,\bar{\gamma}_{k+1})=\int_{\bar{\gamma}_0}^{\bar{\gamma}_{k+1}}\left(f(x_t,t)+g^2(t)\nabla\log\mathbf{\Psi}(x_t,t)\right)\mathrm{d}t, 
        \end{aligned}
    \end{equation}
    and we have
    \begin{equation}
        \begin{aligned}
            p_{k+1|k}^n(x_{k+1}|x_{k})&=\mathcal{N}\left(x_{k+1};x_k+\tilde{h}_1,2\gamma_{k+1}\mathbf{I}\right), \\
            p_{k|0}^n(x_{k}|x_{0})&=\mathcal{N}\left(x_{k};x_0+\tilde{h}_2,2\bar{\gamma}_{k}\mathbf{I}\right), \\
            p_{k+1|0}^n(x_{k+1}|x_{0})&=\mathcal{N}\left(x_{k+1};x_0+\tilde{h}_3,2\bar{\gamma}_{k+1}\mathbf{I}\right),
        \end{aligned}
    \end{equation}
    \begin{equation}
        \begin{aligned}
            &\ p_{k|k+1,0}^n(x_{k}|x_{k+1},x_0) \\
            =&\ \frac{p_{k+1|k}^n(x_{k+1}|x_{k})p_{k|0}^n(x_{k}|x_{0})}{p_{k+1|0}^n(x_{k+1}|x_{0})} \\
            =&\ \frac{\mathcal{N}\left(x_{k+1};x_k+\tilde{h}_1,2\gamma_{k+1}\mathbf{I}\right)\mathcal{N}\left(x_{k};x_0+\tilde{h}_2,2\bar{\gamma}_{k}\mathbf{I}\right)}{\mathcal{N}\left(x_{k+1};x_0+\tilde{h}_3,2\bar{\gamma}_{k+1}\mathbf{I}\right)} \\
            \propto&\ \exp\left\{-\frac{1}{2}\left[\frac{\left(x_{k+1}-x_k-\tilde{h}_1\right)^2}{2\gamma_{k+1}}+\frac{\left(x_{k}-x_0-\tilde{h}_2\right)^2}{2\bar{\gamma}_{k}}-\frac{\left(x_{k+1}-x_0-\tilde{h}_3\right)^2}{2\bar{\gamma}_{k+1}}\right]\right\} \\
            =&\ \exp\left\{-\frac{1}{4}\left[\left(\frac{1}{\gamma_{k+1}}+\frac{1}{\bar{\gamma}_{k}}\right)x_k^2-2\left(\frac{x_{k+1}-\tilde{h}_1}{\gamma_{k+1}}+\frac{x_0+\tilde{h}_2}{\bar{\gamma}_k}\right)x_k+C_1\right]\right\} \\
            =&\ \exp\left\{-\frac{1}{2}\left[\frac{\left[x_k-\left(\frac{\bar{\gamma}_k}{\bar{\gamma}_{k+1}}(x_{k+1}-\tilde{h}_1)+\frac{\gamma_{k+1}}{\bar{\gamma}_{k+1}}(x_{0}+\tilde{h}_2)\right)\right]^2}{\frac{2\gamma_{k+1}\bar{\gamma}_k}{\bar{\gamma}_{k+1}}}+C_2\right]\right\} \\
            =&\ \exp\left\{-\frac{1}{2}\left[\frac{\left[x_k-\left(x_{k+1}+\frac{\gamma_{k+1}}{\bar{\gamma}_{k+1}}(x_0-x_{k+1})-\tilde{h}_1+\frac{\gamma_{k+1}}{\bar{\gamma}_{k+1}}\tilde{h}_3\right)\right]^2}{\frac{2\gamma_{k+1}\bar{\gamma}_k}{\bar{\gamma}_{k+1}}}+C_2\right]\right\} \\
            \propto&\ \mathcal{N}(x_k;x_{k+1}+\frac{\gamma_{k+1}}{\bar{\gamma}_{k+1}}(x_0-x_{k+1})-\tilde{h}_1+\frac{\gamma_{k+1}}{\bar{\gamma}_{k+1}}\tilde{h}_3,\frac{2\gamma_{k+1}\bar{\gamma}_k}{\bar{\gamma}_{k+1}}\mathbf{I}),
        \end{aligned}
    \end{equation}
    where $C_1$ and $C_2$ are constants up to $\{x_{0},x_{k+1},\tilde{h}_1,\tilde{h}_2,\tilde{h}_3\}$. We can also prove Equation~\ref{eq:RDSB_prop_1_b_lemma_1} and Equation~\ref{eq:RDSB_prop_2_b_lemma_1} likewise.
\end{proof}

\begin{lemma}
    Assume that
    \begin{equation}
        \begin{aligned}
            \frac{1}{\bar{\gamma}_{k}}\tilde{h}(0,\bar{\gamma}_k)\approx\frac{1}{\bar{\gamma}_{k+1}}\tilde{h}(0,\bar{\gamma}_{k+1}), \frac{1}{1-\bar{\gamma}_{k}}\tilde{l}(\bar{\gamma}_{k},1)\approx\frac{1}{1-\bar{\gamma}_{k+1}}\tilde{l}(\bar{\gamma}_{k+1},1).
        \end{aligned}
    \end{equation}
    Then, we have
    \begin{subequations}
        \begin{align}
            &\ \tilde{h}(\bar{\gamma}_k,\bar{\gamma}_{k+1})-\frac{\gamma_{k+1}}{\bar{\gamma}_{k+1}}\tilde{h}(0,\bar{\gamma}_{k+1})\approx 0, \label{eq:RDSB_prop_1_lemma_2_a} \\
            &\ \tilde{l}(\bar{\gamma}_{k},\bar{\gamma}_{k+1})-\frac{\gamma_{k+1}}{1-\bar{\gamma}_{k}}\tilde{l}(\bar{\gamma}_{k},1)\approx 0.  \label{eq:RDSB_prop_1_lemma_2_b}
        \end{align}
        \label{eq:RDSB_prop_1_lemma_2}
    \end{subequations}
    \label{lem:RDSB_prop_1_lemma_2}
\end{lemma}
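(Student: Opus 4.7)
The plan is to exploit the additivity of the integral defining $\tilde h$ and $\tilde l$, then substitute in the hypothesized proportionality relations. Concretely, since $\gamma_{k+1}=\bar\gamma_{k+1}-\bar\gamma_k$ and $\tilde h(a,b)$ is an integral of $h(x_t,t)$ over $[a,b]$, we have the telescoping identity
\[
\tilde h(\bar\gamma_k,\bar\gamma_{k+1})=\tilde h(0,\bar\gamma_{k+1})-\tilde h(0,\bar\gamma_k),
\]
and analogously $\tilde l(\bar\gamma_k,\bar\gamma_{k+1})=\tilde l(\bar\gamma_k,1)-\tilde l(\bar\gamma_{k+1},1)$. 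These two identities reduce both statements of the lemma to elementary rearrangements.

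For \eqref{eq:RDSB_prop_1_lemma_2_a}, I would rewrite the first hypothesis as $\tilde h(0,\bar\gamma_k)\approx (\bar\gamma_k/\bar\gamma_{k+1})\,\tilde h(0,\bar\gamma_{k+1})$, substitute into the telescoping identity, and then simplify
\[
\tilde h(\bar\gamma_k,\bar\gamma_{k+1})\approx \Bigl(1-\tfrac{\bar\gamma_k}{\bar\gamma_{k+1}}\Bigr)\tilde h(0,\bar\gamma_{k+1})=\tfrac{\bar\gamma_{k+1}-\bar\gamma_k}{\bar\gamma_{k+1}}\tilde h(0,\bar\gamma_{k+1})=\tfrac{\gamma_{k+1}}{\bar\gamma_{k+1}}\tilde h(0,\bar\gamma_{k+1}),
\]
which is precisely the claim. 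For \eqref{eq:RDSB_prop_1_lemma_2_b} the mirror computation applies: rewrite the second hypothesis as $\tilde l(\bar\gamma_{k+1},1)\approx \tfrac{1-\bar\gamma_{k+1}}{1-\bar\gamma_k}\tilde l(\bar\gamma_k,1)$, subtract from $\tilde l(\bar\gamma_k,1)$, and use $(1-\bar\gamma_k)-(1-\bar\gamma_{k+1})=\gamma_{k+1}$ to obtain $\tilde l(\bar\gamma_k,\bar\gamma_{k+1})\approx \tfrac{\gamma_{k+1}}{1-\bar\gamma_k}\tilde l(\bar\gamma_k,1)$.

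There is essentially no analytic obstacle here; the lemma is a purely algebraic consequence of the two approximate proportionality assumptions, combined with additivity of the integral over a partition of $[0,\bar\gamma_{k+1}]$ (resp.\ $[\bar\gamma_k,1]$) at the point $\bar\gamma_k$ (resp.\ $\bar\gamma_{k+1}$). The only subtlety worth flagging is interpretive: the $\approx$ symbol is used informally throughout, and one should verify that the error incurred in the single substitution step is of the same order as the error already present in the hypotheses, so that no hidden amplification occurs. Since each claim uses the hypothesis exactly once in a linear way, this is immediate. I would therefore present the argument as two short displayed computations, one for each part, with a brief remark that both halves follow the same template by the symmetry $t\mapsto 1-t$.
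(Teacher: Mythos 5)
Your proposal is correct and follows essentially the same route as the paper: both arguments rest on the additivity identity $\tilde h(\bar\gamma_k,\bar\gamma_{k+1})=\tilde h(0,\bar\gamma_{k+1})-\tilde h(0,\bar\gamma_k)$ (and its mirror for $\tilde l$) combined with a single substitution of the hypothesized proportionality, the paper merely packaging the same algebra by factoring out $\bar\gamma_k$ rather than substituting directly. Your remark about checking that the $\approx$ error is not amplified is a reasonable extra caution but does not change the argument.
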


\begin{proof}
    We first prove Equation~\ref{eq:RDSB_prop_1_lemma_2_a}. We have
    \begin{equation}
        \begin{aligned}
            &\ \tilde{h}(\bar{\gamma}_k,\bar{\gamma}_{k+1})-\frac{\gamma_{k+1}}{\bar{\gamma}_{k+1}}\tilde{h}(0,\bar{\gamma}_{k+1}) \\
            =&\ \bar{\gamma}_k\left(\frac{1}{\bar{\gamma}_k}\tilde{h}(\bar{\gamma}_k,\bar{\gamma}_{k+1})-\left(\frac{1}{\bar{\gamma}_k}-\frac{1}{\bar{\gamma}_{k+1}}\right)\tilde{h}(0,\bar{\gamma}_{k+1})\right) \\
            =&\ \bar{\gamma}_k\left(-\frac{1}{\bar{\gamma}_k}\tilde{h}(0,\bar{\gamma}_{k})+\frac{1}{\bar{\gamma}_{k+1}}\tilde{h}(0,\bar{\gamma}_{k+1})\right) \\
            \approx&\ 0.
        \end{aligned}
    \end{equation}
    We can also prove Equation~\ref{eq:RDSB_prop_1_lemma_2_b} likewise.
\end{proof}

Lemma~\ref{lem:RDSB_prop_1_lemma_0}, Lemma~\ref{lem:RDSB_prop_1_lemma_1} and Lemma~\ref{lem:RDSB_prop_1_lemma_2} together conclude the proof of Proposition~\ref{prop:RDSB_prop_full}.

\subsection{Reparameterization}

According to the forward and backward propagation formulas of DSB, we have:

\begin{equation}
    \begin{aligned}
        &x_{\bar{\gamma}_{k+1}} = F_{\bar{\gamma}_k}^n(x_{\bar{\gamma}_k})+\sqrt{2\gamma_{k+1}}\epsilon = x_{\bar{\gamma}_k}+\gamma_{k+1}f_{\bar{\gamma}_k}^n(x_{\bar{\gamma}_k})+\sqrt{2\gamma_{k+1}}\epsilon, \\
        &x_{\bar{\gamma}_{k}} = B_{\bar{\gamma}_{k+1}}^n(x_{\bar{\gamma}_{k+1}})+\sqrt{2\gamma_{k+1}}\epsilon = x_{\bar{\gamma}_{k+1}}+\gamma_{k+1}b_{\bar{\gamma}_{k+1}}^n(x_{\bar{\gamma}_{k+1}})+\sqrt{2\gamma_{k+1}}\epsilon.
    \end{aligned}
    \label{eq:DSB_target_1}
\end{equation}

\noindent By Equation~\ref{eq:SB_SDE}, the DSB aims to solve:

\begin{equation}
    \begin{aligned}
        x_{\bar{\gamma}_{k+1}} &= x_{\bar{\gamma}_k}+\int_{\bar{\gamma}_k}^{\bar{\gamma}_{k+1}}\left[\left(f(x_t,t)+g^2(t)\nabla\log\mathbf{\Psi}(x_t,t)\right)\mathrm{d}t+g(t)\mathrm{d}\mathbf{W}_t\right] \\
        &= x_{\bar{\gamma}_k}+\int_{\bar{\gamma}_k}^{\bar{\gamma}_{k+1}}\left(f(x_t,t)+g^2(t)\nabla\log\mathbf{\Psi}(x_t,t)\right)\mathrm{d}t+\sqrt{2\gamma_{k+1}}\epsilon, \\
        x_{\bar{\gamma}_{k}} &= x_{\bar{\gamma}_{k+1}}+\int_{\bar{\gamma}_{k+1}}^{\bar{\gamma}_{k}}\left[\left(f(x_t,t)-g^2(t)\nabla\log\mathbf{\hat{\Psi}}(x_t,t)\right)\mathrm{d}t+g(t)\mathrm{d}\mathbf{\bar{W}}_t\right]\\
        &= x_{\bar{\gamma}_{k+1}}+\int_{\bar{\gamma}_{k+1}}^{\bar{\gamma}_{k}}\left(f(x_t,t)-g^2(t)\nabla\log\mathbf{\hat{\Psi}}(x_t,t)\right)\mathrm{d}t+\sqrt{2\gamma_{k+1}}\epsilon,
    \end{aligned}
    \label{eq:DSB_target_2}
\end{equation}

\noindent where $g(t)\equiv\sqrt{2}$ in~\cite{de2021diffusion} and $\epsilon\sim\mathcal{N}(0,1)$. By comparing Equation~\ref{eq:DSB_target_1} and Equation~\ref{eq:DSB_target_2}, we find that DSB essentially learns

\begin{equation}
    \begin{aligned}
        f_{\bar{\gamma}_{k}}^n(x_{\bar{\gamma}_{k}})&\approx\frac{1}{\gamma_{k+1}}\int_{\bar{\gamma}_{k}}^{\bar{\gamma}_{k+1}}\left(f(x_t,t)+g^2(t)\nabla\log\mathbf{\Psi}(x_t,t)\right)\mathrm{d}t, \\
        b_{\bar{\gamma}_{k+1}}^n(x_{\bar{\gamma}_{k+1}})&\approx\frac{1}{\gamma_{k+1}}\int_{\bar{\gamma}_{k+1}}^{\bar{\gamma}_{k}}\left(f(x_t,t)-g^2(t)\nabla\log\mathbf{\hat{\Psi}}(x_t,t)\right)\mathrm{d}t,
    \end{aligned}
    \label{eq:DSB_sde_target}
\end{equation}

\noindent where Equation~\ref{eq:SB_SDE} can be viewed as the limit at $\gamma_{k+1} \rightarrow 0$. However, $\mathrm{d}\mathbf{X}_t$ can be noisy to learn as $\mathbf{X}_t$ is sampled from SDE, which involves iteratively adding Gaussian noise, and the network is hard to converge in our practical settings. Given Proposition~\ref{prop:RDSB_prop_full} and the observations in Equation~\ref{eq:DSB_sde_target}, we propose two reparameterization methods inspired by prevalent SGMs in Section~\ref{sec:RDSB}. Here we provide some intuitive understandings.

\textbf{Terminal Reparameterization (TR)} We use two neural networks to learn $\tilde{F}_{\alpha^n}(\bar{\gamma}_k,x_{\bar{\gamma}_k})\approx\tilde{F}_{\bar{\gamma}_k}^n(x_{\bar{\gamma}_k})$ and $\tilde{B}_{\beta^n}(\bar{\gamma}_{k+1},x_{\bar{\gamma}_{k+1}})\approx\tilde{B}_{\bar{\gamma}_{k+1}}^n(x_{\bar{\gamma}_{k+1}})$ s.t.

\begin{equation}
    \begin{aligned}
        \tilde{B}_{\bar{\gamma}_{k+1}}^n(x_{\bar{\gamma}_{k+1}})&\approx x_0=x_{\bar{\gamma}_{k+1}}+\int_{\bar{\gamma}_{k+1}}^{\bar{\gamma}_0}\left(f(x_t,t)-g^2(t)\nabla\log\mathbf{\hat{\Psi}}(x_t,t)\right)\mathrm{d}t, \\
        \tilde{F}_{\bar{\gamma}_k}^n(x_{\bar{\gamma}_k})&\approx x_1=x_{\bar{\gamma}_k}+\int_{\bar{\gamma}_k}^{\bar{\gamma}_N}\left(f(x_t,t)+g^2(t)\nabla\log\mathbf{\Psi}(x_t,t)\right)\mathrm{d}t, 
    \end{aligned}
\end{equation}

\noindent and the training loss is Equation~\ref{eq:TRDSB_loss}. Intuitively, the networks learn to predict terminal points $x_0\sim p_\textup{data}$ and $x_1\sim p_\textup{prior}$ given $x_t$. This recovers DDPM~\cite{ho2020denoising} when $\nabla\log\mathbf{\Psi}(x_t,t)=0$.

\textbf{Flow Reparameterization (FR)} We use two neural networks to learn $\tilde{f}_{\alpha^n}(\bar{\gamma}_k,x_{\bar{\gamma}_k})\approx\tilde{f}_{\bar{\gamma}_k}^n(x_{\bar{\gamma}_k})$ and $\tilde{b}_{\beta^n}(\bar{\gamma}_{k+1},x_{\bar{\gamma}_{k+1}})\approx\tilde{b}_{\bar{\gamma}_{k+1}}^n(x_{\bar{\gamma}_{k+1}})$ s.t.

\begin{equation}
    \begin{aligned}
        \tilde{b}_{\bar{\gamma}_{k+1}}^n(x_{\bar{\gamma}_{k+1}})&\approx\frac{1}{\bar{\gamma}_{k+1}-\bar{\gamma}_0}\int_{\bar{\gamma}_{k+1}}^{\bar{\gamma}_0}\left(f(x_t,t)-g^2(t)\nabla\log\mathbf{\hat{\Psi}}(x_t,t)\right)\mathrm{d}t, \\
        \tilde{f}_{\bar{\gamma}_k}^n(x_{\bar{\gamma}_k})&\approx\frac{1}{\bar{\gamma}_N-\bar{\gamma}_k}\int_{\bar{\gamma}_k}^{\bar{\gamma}_N}\left(f(x_t,t)+g^2(t)\nabla\log\mathbf{\Psi}(x_t,t)\right)\mathrm{d}t,
    \end{aligned}
\end{equation}

\noindent and the training loss is Equation~\ref{eq:FRDSB_loss}. Intuitively, the networks learn the vector field that points from $x_t$ to $x_0$ and $x_1$. This recovers Flow Matching~\cite{lipman2022flow} when $\nabla\log\mathbf{\Psi}(x_t,t)=0$.

In summary, R-DSB learns the integration of Equation~\ref{eq:SB_SDE}.

\section{General Framework of Dynamic Generative Models}

The field of dynamic generative models is currently a subject of intense scholarly interest and researchers have proposed different methods, including Score-based Generative Models (SGMs)~\cite{ho2020denoising,song2019generative}, Flow Matching (FM)~\cite{lipman2022flow}, Image-to-Image Schrödinger Bridge (I$^2$SB)~\cite{liu20232}, Bridge-TTS (BTTS)~\cite{chen2023schrodinger} and Diffusion Schrödinger Bridge (DSB)~\cite{de2021diffusion}. We find that these methods share similarities and can be summarized with Algorithm~\ref{alg:unified_training} and Table~\ref{tab:function_choice}.

In practice, we find that these approaches share a general framework of training, differ only by the network prediction target and how to sample noised data $x_{k}$. These methods can all be summarized by Equation~\ref{eq:SB_SDE}, and to a certain extent can be regarded as special cases of solving the Schrödinger Bridge (SB) problem. For example, if we set the non-linear drift $\nabla\log\mathbf{\Psi}(\mathbf{X}_t,t)$ to zero, by Nelson's duality~\cite{nelson1967dynamical} and Equation~\ref{eq:SB_marginal} we have

\begin{equation}
    \begin{aligned}
        \nabla\log\mathbf{\hat{\Psi}}(\mathbf{X}_t,t) &= -\nabla\log\mathbf{\Psi}(\mathbf{X}_t,t) + \nabla\log p_t(\mathbf{X}_t) \\
        &= \nabla\log p_t(\mathbf{X}_t),
    \end{aligned}
\end{equation}

\noindent where $p_t$ is the marginal density at time $t$, which is indeed the score in SGM. This unification motivates us to integrate beneficial designs of SGMs into our Simplified DSB (S-DSB) and Reparameterized DSB (R-DSB), and may inspire future works.

Finally, the training pipeline of our S-DSB is summarized in Algorithm~\ref{alg:SDSB}, where we highlight all differences with the original DSB~\cite{de2021diffusion} in blue.

\begin{algorithm}[!t]
	\caption{Training dynamic generative models} 
    \hspace*{\algorithmicindent} \textbf{Input}: Rounds $R$, Timesteps $N$, Data distribution $p_\textup{data}$, Prior distribution $p_\textup{prior}$, Learning rate $\eta$ \\
    \hspace*{\algorithmicindent} \textbf{Output}: Model $M_\theta$
	\begin{algorithmic}[1]
		\For {$n\in\{1,\ldots,R\}$}
			\While {not converged}
				\State Sample $x_0\sim p_\textup{data}$, $x_N\sim p_\textup{prior}$ and $k\in\{0,1,\ldots,N-1\}$
                \State Get noised sample $x_k$
                \State Get prediction target $y_k$
				\State Compute loss $\mathcal{L}=\left\|M_\theta(k,x_k)-y_k\right\|$
                \State Optimize $\theta\leftarrow\theta-\eta\nabla_\theta\mathcal{L}$
			\EndWhile
		\EndFor
	\end{algorithmic}
    \label{alg:unified_training}
\end{algorithm}

\begin{table}[!t]
    \centering
    \begin{tabular}{@{}l|ccc@{}}
    \toprule
                  & $R$ & $x_k$ & $y_k$ \\ \midrule
    SGM~\cite{song2019generative}     & 1 & $x_0+\beta_kx_N$ & $\nabla\log p_k$ \\
    SGM~\cite{ho2020denoising}     & 1 & $\sqrt{\bar{\alpha}_k}x_0+\sqrt{1-\bar{\alpha}_k}x_N$ & $x_N$ \\
    FM~\cite{lipman2022flow} & 1 & $(1-\frac{k}{N})x_0+\frac{k}{N}x_N$ & $x_N-x_0$ \\
    I$^2$SB~\cite{liu20232} & 1 & $\frac{\bar{\sigma}_k^2}{\bar{\sigma}_k^2+\sigma_k^2}x_0+\frac{\sigma_k^2}{\bar{\sigma}_k^2+\sigma_k^2}x_N+\frac{\sigma_k\bar{\sigma}_k}{\sqrt{\bar{\sigma}_k^2+\sigma_k^2}}\epsilon$ & $\frac{1}{\sigma_k}(x_k-x_0)$ \\
    BTTS~\cite{chen2023schrodinger} & 1 & $\frac{\alpha_k\bar{\sigma}_k^2}{\sigma_N^2}x_0+\frac{\bar{\alpha}_k\sigma_k^2}{\sigma_N^2}x_N+\frac{\alpha_k\bar{\alpha}_k\sigma_k}{\sigma_N}\epsilon$ & $x_0$ \\
    DSB~\cite{de2021diffusion} & $>1$ & $\left.\begin{matrix}F_{k-1}(x_{k-1})+\sqrt{2\gamma_k}\epsilon|_{n=1,3,\ldots}\\B_{k+1}(x_{k+1})+\sqrt{2\gamma_k}\epsilon|_{n=2,4,\ldots}\end{matrix}\right.$ & $\left.\begin{matrix}x_k+F_{k-1}(x_{k-1})-F_{k-1}(x_k)|_{n=1,3,\ldots}\\x_k+B_{k+1}(x_{k+1})-B_{k+1}(x_k)|_{n=2,4,\ldots}\end{matrix}\right.$ \\
    S-DSB & $>1$ & $\left.\begin{matrix}F_{k-1}(x_{k-1})+\sqrt{2\gamma_k}\epsilon|_{n=1,3,\ldots}\\B_{k+1}(x_{k+1})+\sqrt{2\gamma_k}\epsilon|_{n=2,4,\ldots}\end{matrix}\right.$ & $\left.\begin{matrix}x_{k-1}|_{n=1,3,\ldots}\\x_{k+1}|_{n=2,4,\ldots}\end{matrix}\right.$ \\
    TR-DSB & $>1$ & $\left.\begin{matrix}F_{k-1}(x_{k-1})+\sqrt{2\gamma_k}\epsilon|_{n=1,3,\ldots}\\B_{k+1}(x_{k+1})+\sqrt{2\gamma_k}\epsilon|_{n=2,4,\ldots}\end{matrix}\right.$ & $\left.\begin{matrix}x_{0}|_{n=1,3,\ldots}\\x_{N}|_{n=2,4,\ldots}\end{matrix}\right.$ \\
    FR-DSB & $>1$ & $\left.\begin{matrix}F_{k-1}(x_{k-1})+\sqrt{2\gamma_k}\epsilon|_{n=1,3,\ldots}\\B_{k+1}(x_{k+1})+\sqrt{2\gamma_k}\epsilon|_{n=2,4,\ldots}\end{matrix}\right.$ & $\left.\begin{matrix}\frac{1}{\bar{\gamma}_k-\bar{\gamma}_0}(x_{k}-x_{0})|_{n=1,3,\ldots}\\\frac{1}{\bar{\gamma}_N-\bar{\gamma}_k}(x_{N}-x_{k})|_{n=2,4,\ldots}\end{matrix}\right.$ \\
    \bottomrule
    \end{tabular}
    \caption{Specific choices of different methods on training rounds $R$, noised sample $x_k$, and prediction target $y_k$ in Algorithm~\ref{alg:unified_training}, where $\epsilon\sim\mathcal{N}(0,1)$, $x_0\sim p_\textup{data}$, and $x_N\sim p_\textup{prior}$.}
    \label{tab:function_choice}
\end{table}

\begin{algorithm}[!h]
    \caption{Simplified Diffusion Schrödinger Bridge}
    \begin{algorithmic}[1] 
      \For {$n\in\{0,\dots,L\}$}
        \While {not converged}
            \State Sample $\{X^j_{k}\}_{k,j=0}^{N,M}$, where  $X^j_0 \sim p_\textup{data}$, and $X^{j}_{k+1} = F_{\alpha^n}(k, X^{j}_{k})+\sqrt{2\gamma_{k+1}}\epsilon$
            \State \textcolor{blue}{$\hat{\ell}_n^b\leftarrow\nabla_{\beta^n}\left[\left\|B_{\beta^n}(k+1,X^{j}_{k+1})-X^{j}_{k}\right\|^2\right]$}
            \State $\beta^{n} \leftarrow \textrm{Gradient Step}(\hat{\ell}_n^b(\beta^n))$ 
        \EndWhile
        \While {not converged}
            \State Sample $\{X^j_{k}\}_{k,j=0}^{N,M}$, where $X^j_N \sim p_\textup{prior}$, and $X^j_{k-1}=B_{\beta^n}(k, X^{j}_k)+\sqrt{2\gamma_{k}}\epsilon$ 
            \State \textcolor{blue}{$\hat{\ell}_{n+1}^f\leftarrow\nabla_{\alpha^{n+1}}\left[\left\|F_{\alpha^{n+1}}(k,X^{j}_{k})-X^{j}_{k+1}\right\|^2\right]$}
            \State $\alpha^{n+1} \leftarrow \textrm{Gradient Step}(\hat{\ell}_{n+1}^f(\alpha^{n+1}))$
        \EndWhile
      \EndFor
    \end{algorithmic}
    \label{alg:SDSB}
\end{algorithm}

\section{Limitation}

Although we propose a simplified training target in Equation~\ref{eq:DSB_simplified_loss} and two reparameterization techniques in Section~\ref{sec:RDSB}, we find that the convergence of DSB is still time-consuming. For example, it takes about one day and $8\times$ V100 GPUs to train an unpaired image-to-image translation model on the AFHQ \texttt{cat-dog} dataset and $256\times256$ resolution, even provided two pretrained Flow Matching models as initialization. It also hinders us from scaling up experiments on larger datasets. Besides, our R-DSB is based on the assumption of Equation~\ref{eq:RDSB_assumption}, which may cause errors in practical applications. While R-DSB presents promising performances, we believe better approximation and error analysis will be a valuable direction for future research.

\section{More Results}

In Figure~\ref{fig:results_celeba}, we illustrate unconditional generation results on the CelebA~\cite{liu2015faceattributes} dataset at $64\times64$ resolution. Our method is capable of generating images from Gaussian priors ($p_\textup{prior}=\mathcal{N}(0,1)$).

We show more results on the AFHQ dataset at $256\times256$ resolution in Figure~\ref{fig:dsb-supp-256}. Moreover, we also scale up the image resolution to $512\times512$ in Figure~\ref{fig:dsb-supp-512}. It demonstrates that our method generates realistic appearances and rich details on high-dimensional space.

\begin{figure}[t]
    \centering
    \includegraphics[width=\linewidth]{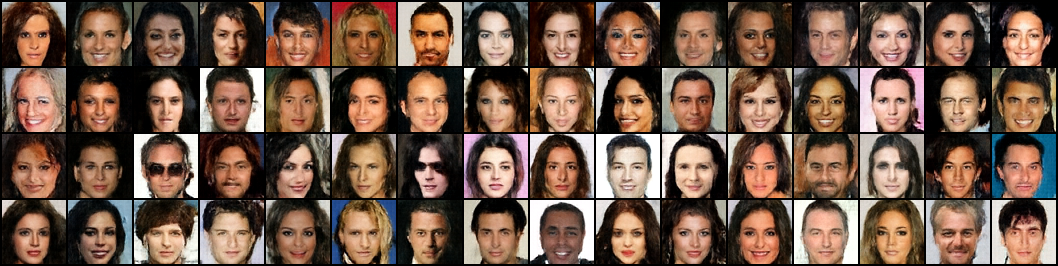}
    \caption{Generation results on CelebA.}
    \label{fig:results_celeba}
\end{figure}

\begin{figure}[t]
    \centering
    \includegraphics[width=\textwidth]{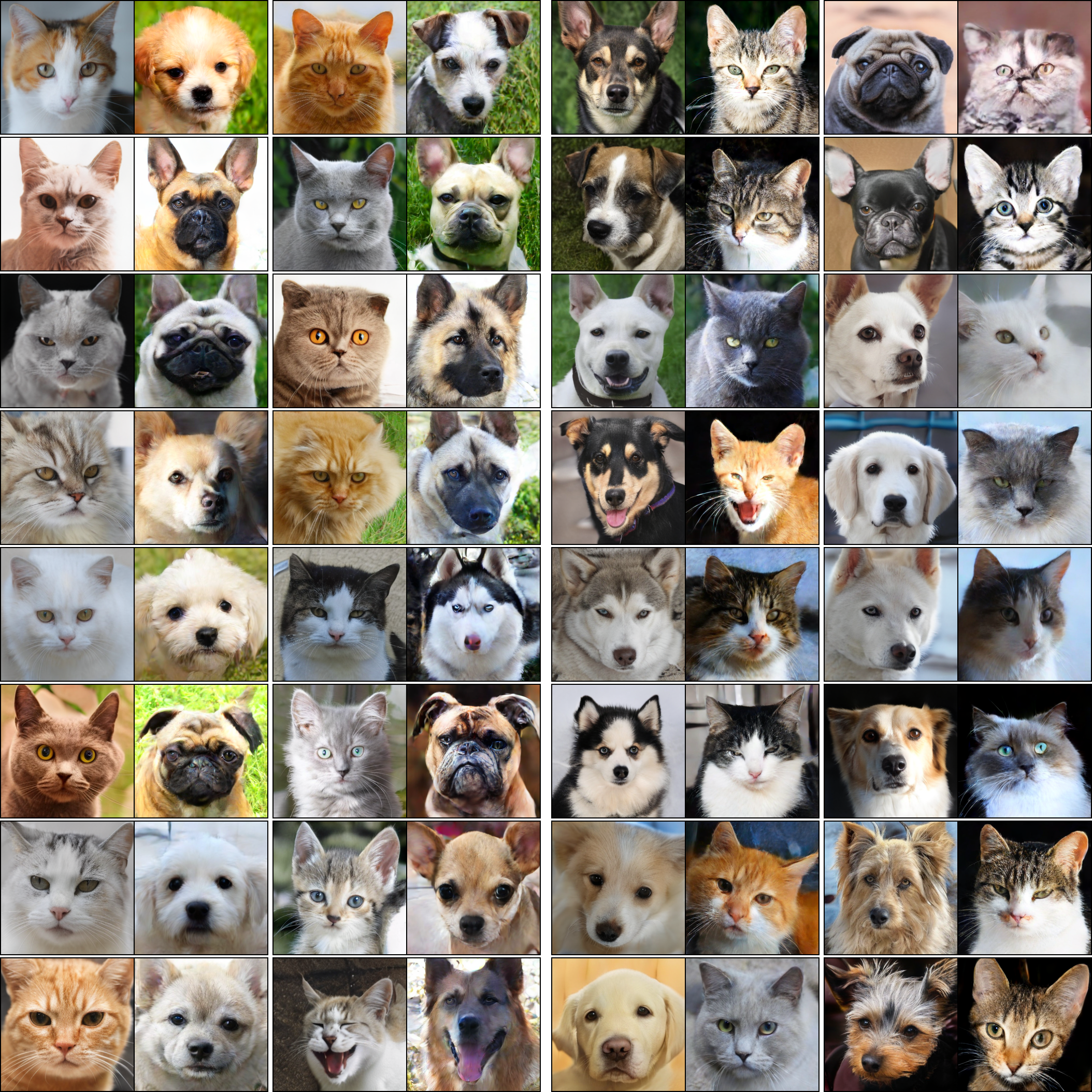}
    \caption{More results at $256\times256$ resolution on unpaired \texttt{dog} $\leftrightarrow$ \texttt{cat} translation. Left: \texttt{cat} to \texttt{dog}; right: \texttt{dog} to \texttt{cat}. For each case, the left is the input image and the right is the translated result from our model.}
    \label{fig:dsb-supp-256}
\end{figure}

\begin{figure}[t]
    \centering
    \includegraphics[width=\textwidth]{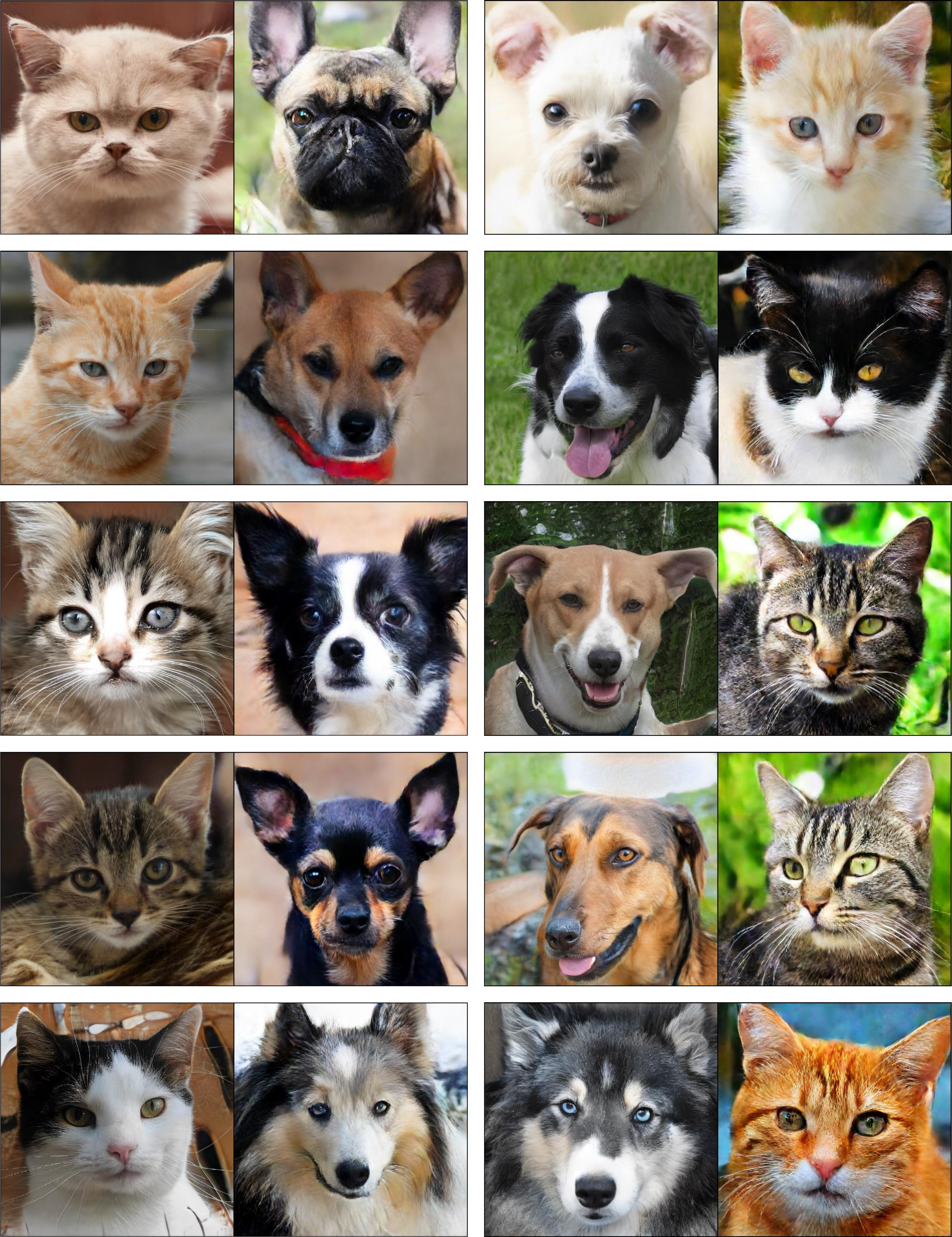}
    \caption{More results at $512\times512$ resolution on unpaired \texttt{dog} $\leftrightarrow$ \texttt{cat} translation. Left: \texttt{cat} to \texttt{dog}; right: \texttt{dog} to \texttt{cat}. For each case, the left is the input image and the right is the translated result from our model.}
    \label{fig:dsb-supp-512}
\end{figure}

\end{document}